\newtheorem{theorem}{Theorem}
\newtheorem{corollary}{Corollary}
\newtheorem{definition}{Definition}
\journal{AIJ}
\begin{document}

\begin{frontmatter}

\title{Compact and Efficient Encodings for Planning in Factored State and Action Spaces with Learned Binarized Neural Network Transition Models\tnoteref{mytitlenote}}
\tnotetext[mytitlenote]{Parts of this work appeared in preliminary form in Say and Sanner, 2018~\cite{Say2018}.}


\author{Buser Say, Scott Sanner\\
   \{bsay,ssanner\}@mie.utoronto.ca}
\address{Department of Mechanical \& Industrial Engineering, University of Toronto, 
   Canada\\ 
   Vector Institute, Canada}




\begin{abstract}
In this paper, we leverage 
the efficiency of Binarized Neural Networks (BNNs) to 
learn complex state transition models of planning domains 
with discretized factored state and action spaces.  
In order to directly exploit this transition structure for 
planning, we present two novel compilations of the learned 
factored planning problem with BNNs based on reductions to 
Weighted Partial Maximum Boolean Satisfiability (FD-SAT-Plan+) 
as well as Binary Linear Programming (FD-BLP-Plan+). 
Theoretically, we show that our SAT-based Bi-Directional Neuron 
Activation Encoding is asymptotically the most compact encoding 
relative to the current literature and supports Unit Propagation (UP) -- 
an important property that facilitates efficiency in SAT solvers.
Experimentally, we validate the computational 
efficiency of our Bi-Directional Neuron Activation Encoding 
in comparison to an existing neuron activation encoding and  
demonstrate the ability to learn complex transition 
models with BNNs.  We test the runtime efficiency of both 
FD-SAT-Plan+ and FD-BLP-Plan+ on the learned factored planning 
problem showing that FD-SAT-Plan+ scales better with increasing 
BNN size and complexity. Finally, we present a finite-time 
incremental constraint generation algorithm based on generalized 
landmark constraints to improve the planning accuracy of our 
encodings through simulated or real-world interaction.
\end{abstract}

\begin{keyword}
data-driven planning \sep binarized neural networks \sep Weighted 
Partial Maximum Boolean Satisfiability \sep Binary Linear Programming
\end{keyword}

\end{frontmatter}


\section{Introduction}

Deep neural networks (DNNs) have significantly improved the ability 
of autonomous systems to perform complex tasks, such as image 
recognition~\cite{Krizhevsky2012}, speech recognition~\cite{Deng2013} 
and natural language processing~\cite{Collobert2011}, and can outperform 
humans and human-designed super-human systems (i.e., systems that can 
achieve better performance than humans) in complex planning tasks such 
as Go~\cite{Alphago2016} and Chess~\cite{Alphazero2017}.  

In the area of learning and planning, recent work on 
HD-MILP-Plan~\cite{Say2017} has
explored a two-stage framework that (i) learns transition 
models from data with ReLU-based DNNs and (ii) plans 
optimally with respect to the learned transition models using 
Mixed-Integer Linear Programming, but did not provide encodings 
that are able to learn and plan with \emph{discrete} state variables.  
As an alternative to ReLU-based DNNs, Binarized Neural Networks 
(BNNs)~\cite{Hubara2016} have been introduced with the specific ability 
to learn compact models over discrete variables, providing a new 
formalism for transition learning and planning in 
factored~\cite{Boutilier1999} discretized state and action spaces that 
we explore in this paper. However planning with these BNN transition 
models poses two non-trivial questions: (i) What is the most efficient 
compilation of BNNs for planning in domains with factored state and 
(concurrent) action spaces? (ii) Given that BNNs may learn incorrect 
domain models and a planner can sometimes have limited access to 
real-world (or simulated) feedback, how can the planner 
repair BNN compilations to improve their planning accuracy?

To answer question (i), we present two novel compilations of the learned 
factored planning problem with BNNs based on reductions to Weighted Partial 
Maximum Boolean Satisfiability (FD-SAT-Plan+) and Binary Linear Programming 
(FD-BLP-Plan+). Theoretically, we show that the SAT-based Bi-Directional 
Neuron Activation Encoding is asymptotically the most compact encoding relative to the current literature~\cite{Boudane2018} and is efficient with Unit Propagation 
(UP). Experimentally, we demonstrate the computational efficiency of our 
Bi-Directional Neuron Activation Encoding compared to the existing neuron 
activation encoding~\cite{Say2018}. Then, we test the 
effectiveness of learning complex state transition models with BNNs, and 
test the runtime efficiency of both FD-SAT-Plan+ and FD-BLP-Plan+ on the 
learned factored planning problems over four factored planning domains 
with multiple size and horizon settings. 

While there are methods for 
learning PDDL models from data~\cite{Yang2007,Amir2008} and excellent 
PDDL planners~\cite{Helmert2006,Richter2010}, we remark that BNNs are 
strictly more expressive than PDDL-based learning paradigms for learning 
concurrent effects in factored action spaces that may depend on the joint 
execution of one or more actions. Furthermore, while Monte Carlo Tree 
Search (MCTS) methods~\cite{Kocsis2006,Keller2013} including 
AlphaGo~\cite{Alphago2016} and AlphaGoZero~\cite{Alphago2016} 
\emph{could} technically plan with a  BNN-learned black box model of 
transition dynamics, unlike this work, they do not exploit 
the BNN transition structure (i.e., they simply sample it as a black box) and they do not provide 
optimality guarantees with respect to the learned model (unless they exhaustively sample all trajectories).  Other methods that combine deep learning and planning such as ASNets~\cite{Toyer2018} focus on exploiting (P)PDDL domain structure for learning policies, but do not learn BNN transition models from raw data and then plan with these BNN models as we focus on here.

To answer question (ii), that is, for the additional scenario when the 
planner has further access to real-world (or simulated) feedback, we introduce 
a finite-time constraint generation algorithm based on generalized landmark 
constraints from the decomposition-based cost-optimal classical 
planner~\cite{Davies2015}, where we detect and constrain invalid sets of 
action selections from the decision space of the planners and efficiently 
improve their planning accuracy through simulated or real-world interaction. 

In sum, this work provides the first two planners capable of 
learning complex transition models in domains with mixed (continuous 
and discrete) factored state and action spaces as BNNs, and capable of 
exploiting their structure in Weighted Partial Maximum Boolean 
Satisfiability and Binary Linear Programming encodings for planning 
purposes. 
Theoretically, we show the compactness and efficiency of our SAT-based 
encoding, and the finite-time convergence of our incremental algorithm. Empirical 
results show the computational efficiency of our new Bi-Directional 
Neuron Activation Encoding demonstrates strong performance for 
FD-SAT-Plan+ and FD-BLP-Plan+ in both the learned and original domains, 
and provide a new transition learning and planning formalism to the 
data-driven model-based planning community.

\section{Preliminaries}

Before we present the Weighted Partial Maximum Boolean Satifiability 
(WP-MaxSAT) and Binary Linear Programming (BLP) compilations of the 
learned planning problem, we review the preliminaries motivating this 
work. We begin this section by describing the formal notation and the 
problem definition that is used in this work.


\subsection{Problem Definition}

A deterministic factored~\cite{Boutilier1999} planning problem is a tuple 
$\Pi = \langle S,A,C,T,I,G,R \rangle$ where $S=\{s_1, \dots ,s_{n_1}\}$ 
and $A=\{a_1, \dots ,a_{n_2}\}$ are sets of state and action variables 
for positive integers $n_1, n_2$ with domains $D_{s_1}, \dots, D_{s_{|S|}}$ 
and $D_{a_1}, \dots, D_{a_{|A|}}$ respectively, $C: D_{s_1} \times \dots 
\times D_{s_{|S|}} \times D_{a_1} \times \dots \times D_{a_{|A|}} \rightarrow 
\{true,false\}$ is the global function representing the properties of state 
and action variables that must be true for all time steps, $T: D_{s_1} \times 
\dots \times D_{s_{|S|}} \times D_{a_1} \times \dots \times D_{a_{|A|}} 
\rightarrow D_{s_1} \times \dots \times D_{s_{|S|}}$ denotes the stationary 
transition function between two time steps, and $R: D_{s_1} \times \dots \times 
D_{s_{|S|}} \times D_{a_1} \times \dots \times D_{a_{|A|}} \rightarrow \mathbb{R}$ 
is the reward function. Finally, $I: D_{s_1} \times \dots \times D_{s_{|S|}} 
\rightarrow \{true,false\}$ is the initial state function that defines the initial 
values of state variables, and $G: D_{s_1} \times \dots \times D_{s_{|S|}} 
\rightarrow \{true,false\}$ is the goal state function that defines the properties 
of state variables that must be true at the last time step. In this work, we assume 
the initial value $V_i \in \mathbb{R}$ of each state variable $s_i \in S$ is known.

Given a planning horizon $H$, a solution $\pi = \langle \bar{A}^1, \dots, \bar{A}^H 
\rangle$ (i.e. plan) to problem $\Pi$ is a tuple of values $\bar{A}^t = \langle \bar{a}^t_1 , 
\dots, \bar{a}^t_{|A|} \rangle \in D_{a_1} \times \dots \times D_{a_{|A|}}$ for all action 
variables $A$ and time steps $t\in \{1,\dots,H\}$ (and a tuple of values $\bar{S}^t = 
\langle \bar{s}^t_1 , \dots, \bar{s}^t_{|S|} \rangle \in D_{s_1} \times \dots \times 
D_{s_{|S|}}$ for all state variables $S$ and time steps $t\in \{1,\dots,H+1\}$) 
such that $T(\langle \bar{s}^t_1 , \dots, \bar{s}^t_{|S|}, \bar{a}^t_1 , \dots, 
\bar{a}^t_{|A|} \rangle) = \bar{S}^{t+1}$ and $C(\langle \bar{s}^t_1 , \dots, 
\bar{s}^t_{|S|}, \bar{a}^t_1 , \dots, \bar{a}^t_{|A|} \rangle) = true$ for all time 
steps $t\in \{1,\dots,H\}$, and $I(\bar{S}^{1}) = true$ for time step $t=1$ and $G(\bar{S}^{H+1}) = true$ for time step $t=H+1$, where $\bar{x}^t$ denotes the value 
of variable $x \in A \cup S$ at time step $t$. Similarly, an optimal 
solution to $\Pi$ is a plan such that the total reward $\sum_{t=1}^{H}R(\langle 
\bar{s}^{t+1}_1 , \dots, \bar{s}^{t+1}_{|S|}, \bar{a}^t_1 , \dots, \bar{a}^t_{|A|} 
\rangle)$ is maximized. For notational simplicity, we denote the tuple of variables 
$\langle x_{e_{1}}, \dots, x_{e_{|E|}} \rangle$ as $\langle x_e | e \in E \rangle$ 
given set $E$, and use the symbol ${}^\frown$ for the concatenation of two tuples. Next, we introduce an example domain with a complex transition structure.

\subsection{Example Domain: Cellda}
\label{sec:example_domain}

\begin{figure}
    \begin{subfigure}{.32\textwidth}
    \centering
        \includegraphics[width=\linewidth]{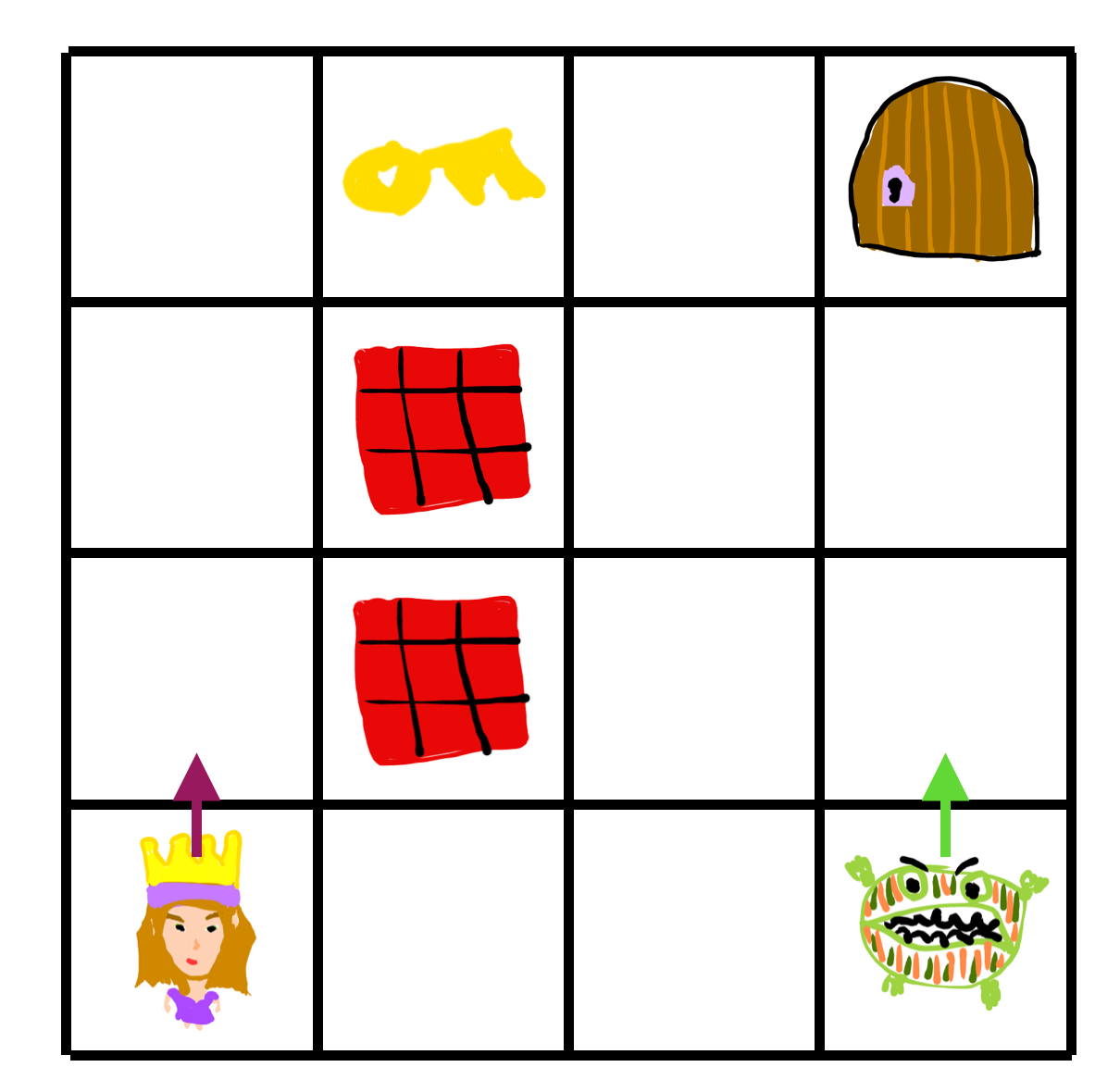}
        \caption{Time step t=1}
        \label{fig:cellda_naive_0}
    \end{subfigure}
    \begin{subfigure}{.32\textwidth}
    \centering
        \includegraphics[width=\linewidth]{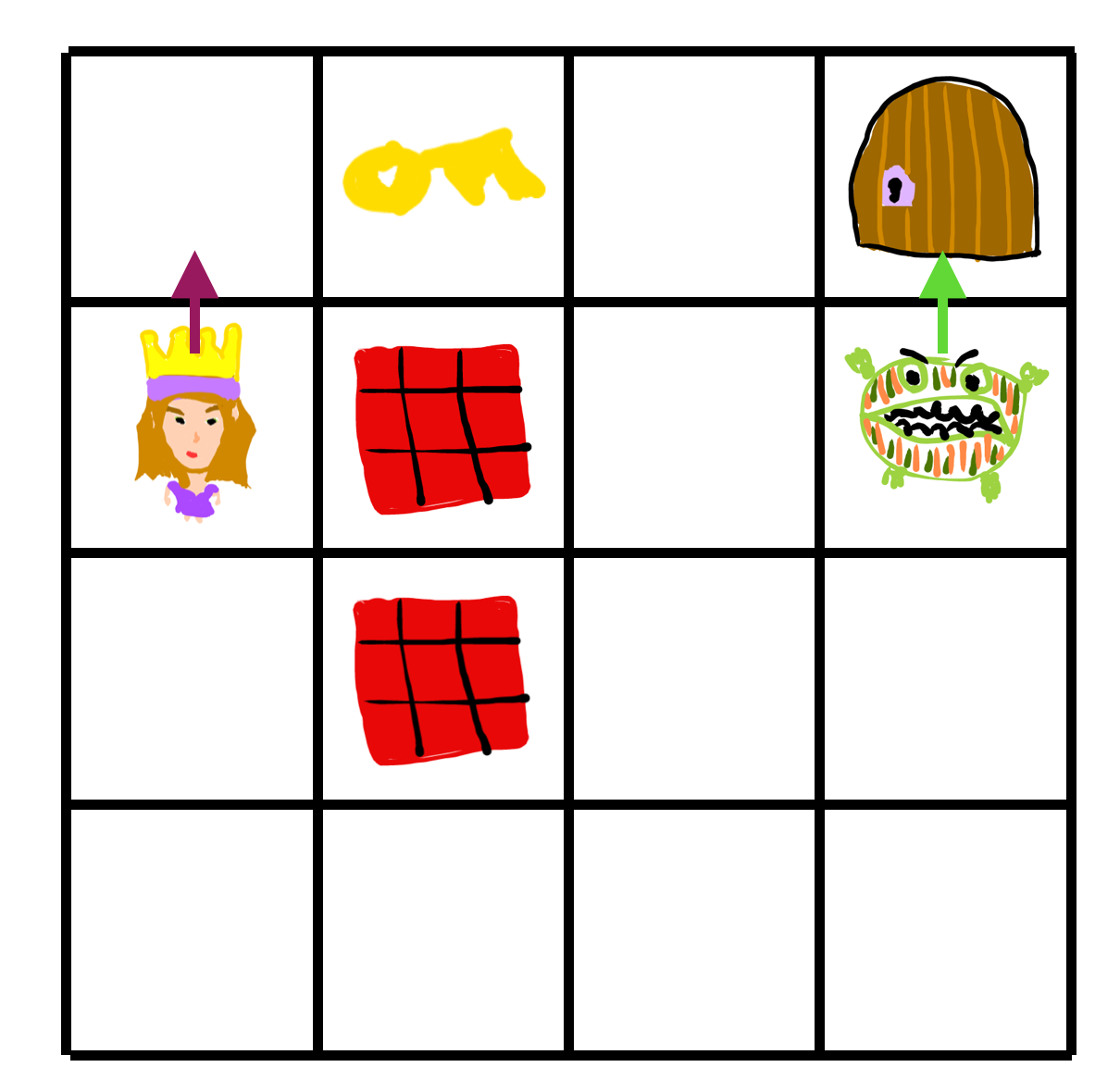}
        \caption{Time step t=3}
        \label{fig:cellda_naive_1}
    \end{subfigure}
    \begin{subfigure}{.32\textwidth}
    \centering
        \includegraphics[width=\linewidth]{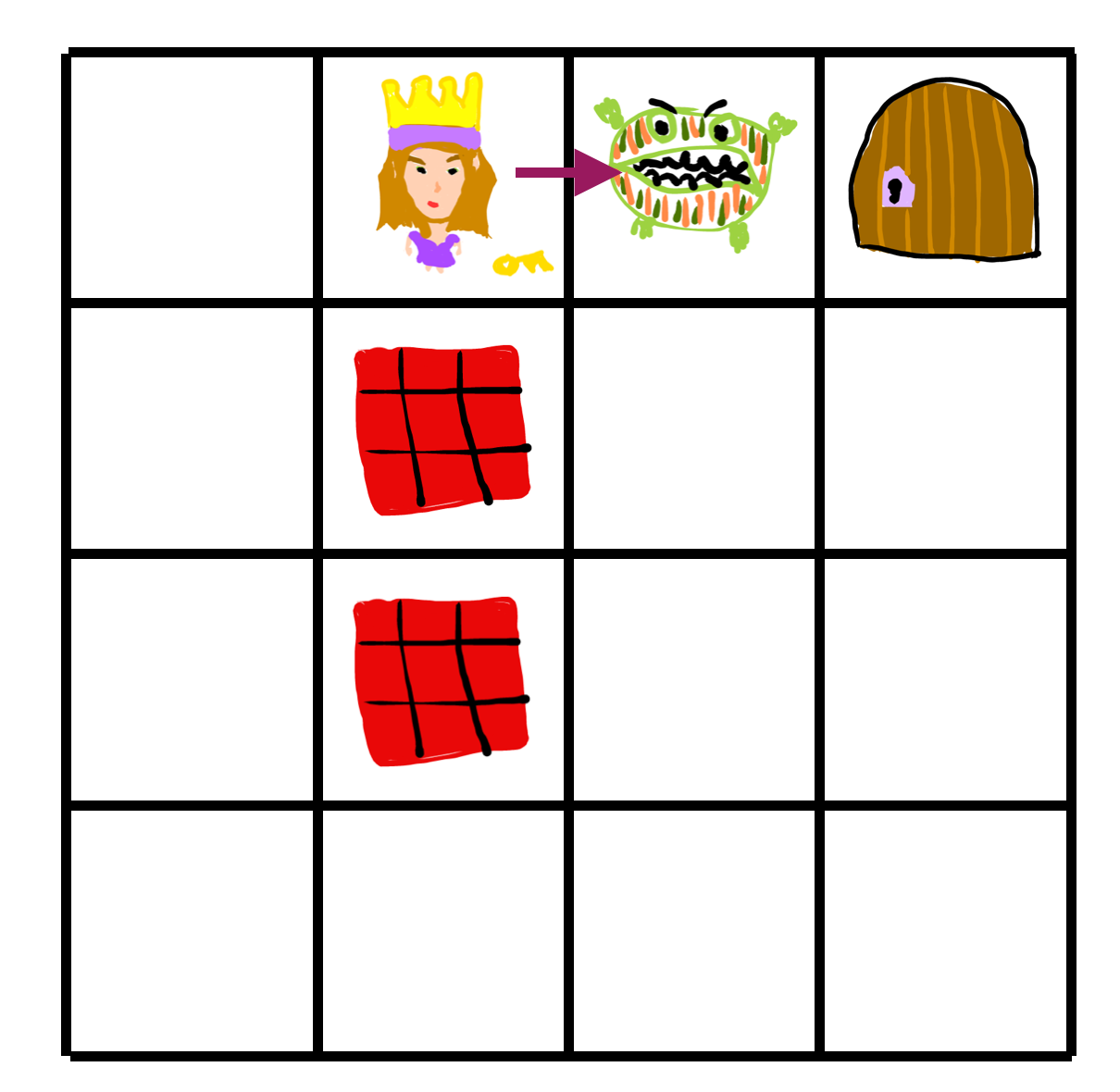}
        \caption{Time step t=5}
        \label{fig:cellda_naive_2}
    \end{subfigure}
 
    \begin{subfigure}{.32\textwidth}
    \centering
        \includegraphics[width=\linewidth]{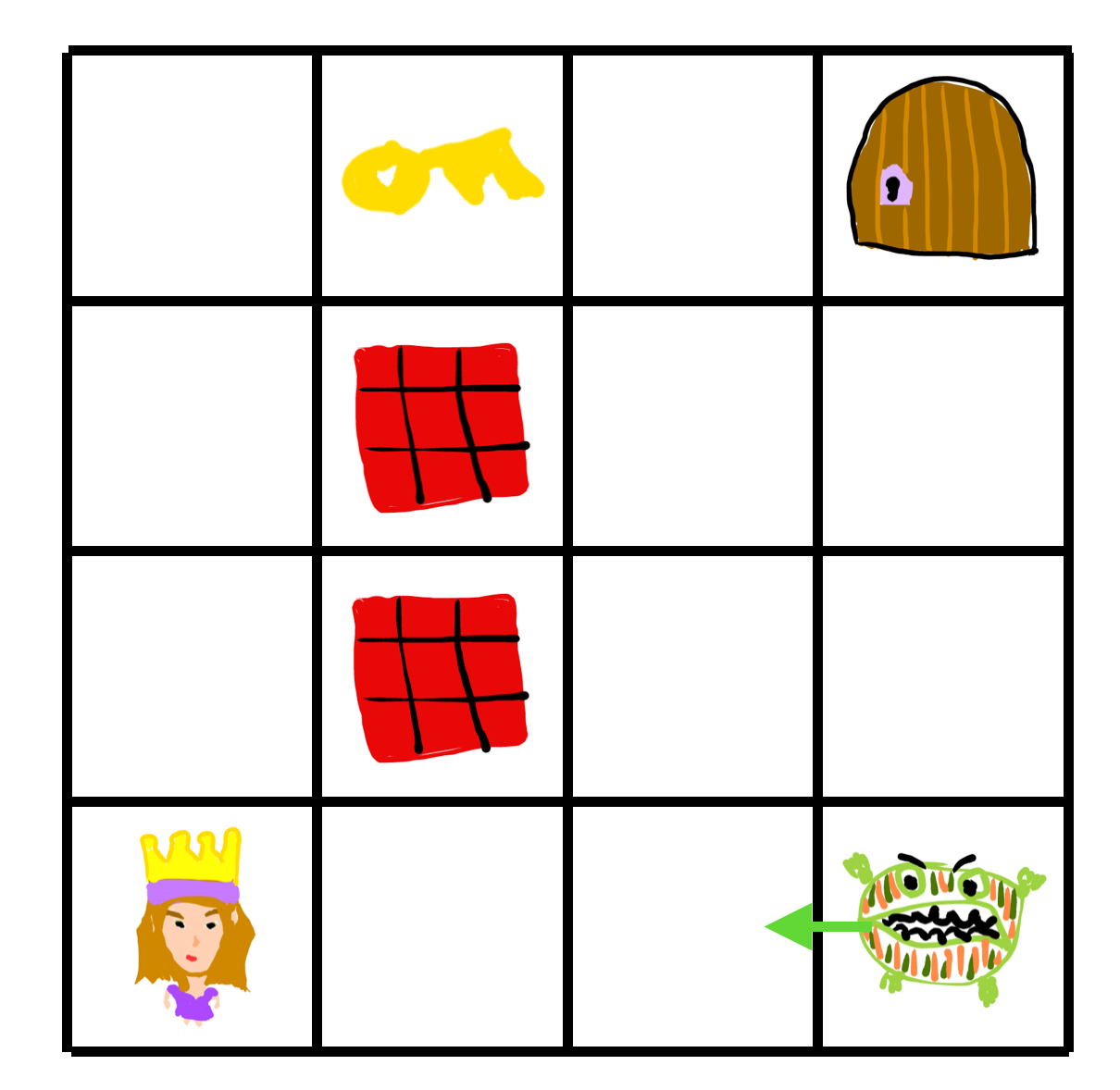}
        \caption{Time step t=1}
        \label{fig:cellda_learn_0}
    \end{subfigure}
    \begin{subfigure}{.32\textwidth}
    \centering
        \includegraphics[width=\linewidth]{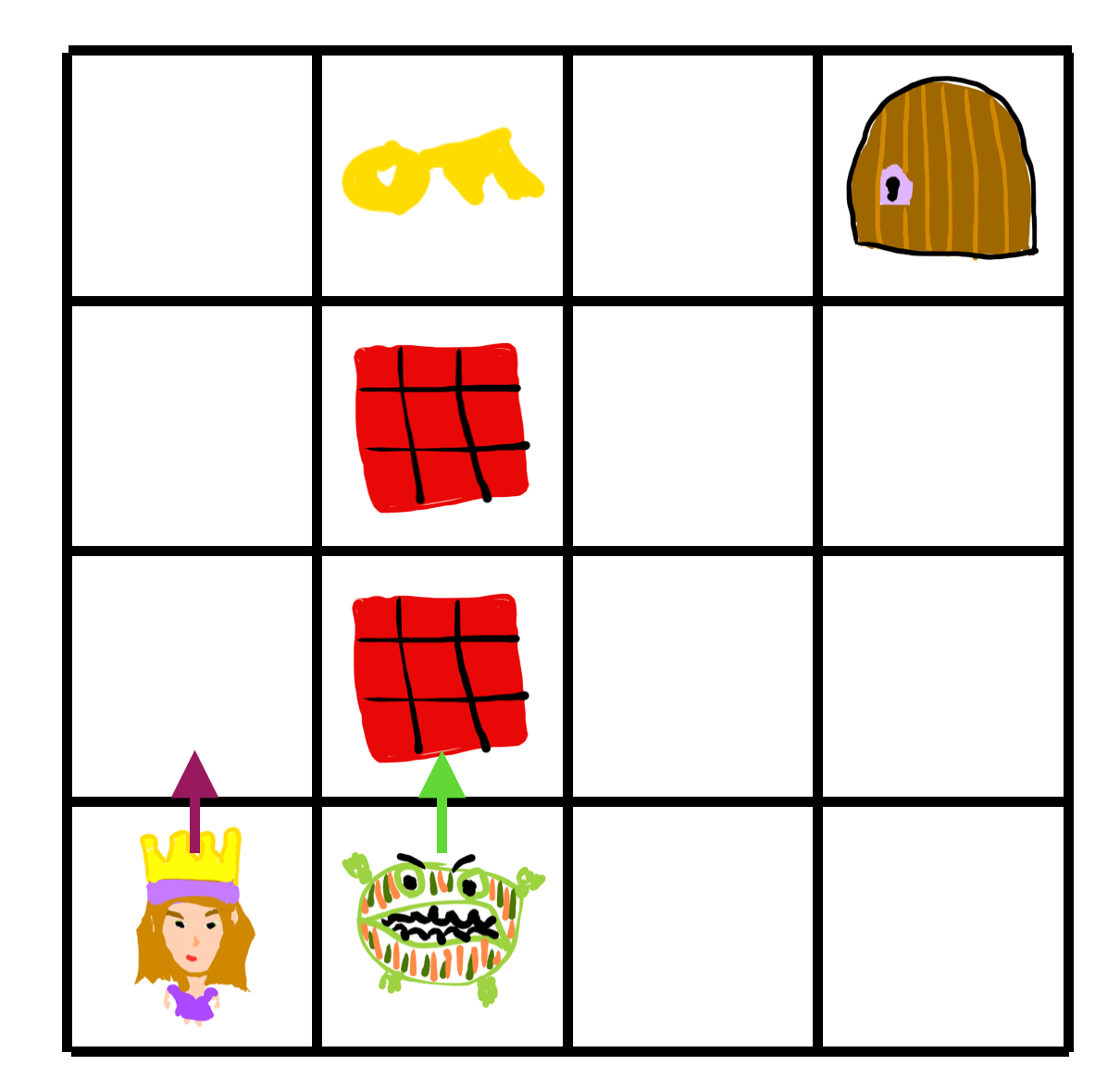}
        \caption{Time step t=3}
        \label{fig:cellda_learn_1}
    \end{subfigure}
    \begin{subfigure}{.32\textwidth}
    \centering
        \includegraphics[width=\linewidth]{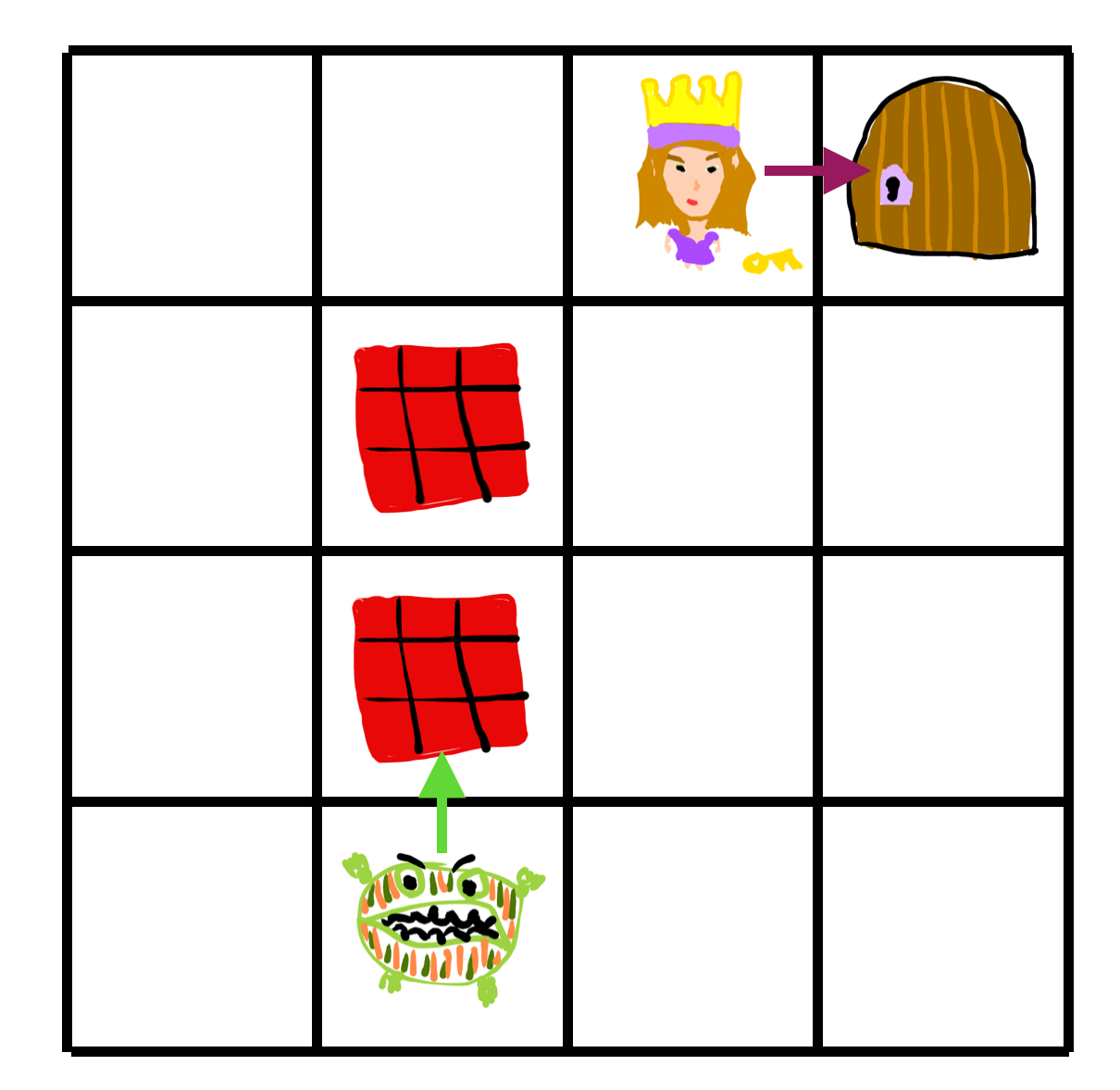}
        \caption{Time step t=9}
        \label{fig:cellda_learn_2}
    \end{subfigure}
    \caption{Visualization of two different attempts of solving 
the automated planning problem, namely Cellda. In all figures, 
the character (i.e., Cellda) that is controlled by the planner is 
represented by a humanoid character wearing a golden crown, the 
other character (i.e., the enemy) is represented by the green 
monster. The planner must control Cellda so that she obtains the 
key and escapes through the door without getting hit by the enemy.
The top three figures (\ref{fig:cellda_naive_0}-\ref{fig:cellda_naive_2}) 
represent an unsuccessful attempt of solving the problem whereas 
the bottom three figures (\ref{fig:cellda_learn_0}-\ref{fig:cellda_learn_2}) 
represent a successful attempt of solving the problem. In the unsuccessful 
attempt, Cellda moves to obtain the key by moving up three times followed 
by a move to the right. Once the key is obtained, Cellda keeps moving to 
the right but gets intercepted by the enemy. In the successful 
attempt, Cellda waits for two time steps and allows the enemy to approach 
her. Once the enemy is located underneath the block (i.e., represented by 
the red squares), Cellda starts moving up three times and to the right in 
order to first obtain the key, and then moves right towards the door to 
escape. In order to solve this complex automated planning problem, the 
planner must make long-term decisions while accounting for the 
\textit{unknown} adversarial behaviour of the enemy.}
  \label{fig:cellda_intro}
\end{figure}

Influenced by the famous video game {\it The Legend of Zelda}~\cite{Nintendo1986}, the 
Cellda domain models an agent who must escape from a two dimensional ($N$-by-$N$) 
dungeon cell through an initially locked door by obtaining a key without getting 
hit by the enemy. Next, we describe the Cellda domain as a factored planning 
problem $\Pi$ and illustrate two planning scenarios in Figure~\ref{fig:cellda_intro}.

\begin{itemize}
    \item Location of the agent, location of the enemy, whether 
the key is obtained or not and whether the agent is alive or not 
are represented by six state variables $S = \{s_1, \dots, s_6\}$ 
where state variables $s_1$ and $s_2$ represent the horizontal and 
vertical locations of the agent with integer domains, state variables 
$s_3$ and $s_4$ represent the horizontal and vertical locations of the 
enemy with integer domains, state variable $s_5$ represents whether 
the key is obtained or not with a Boolean domain, and state variable 
$s_6$ represents whether the agent is alive or not with a Boolean 
domain.
    \item Intended movement of the agent is represented by four action 
variables $A = \{a_1, a_2, a_3, a_4\}$ with Boolean domains where action 
variables $a_1$, $a_2$, $a_3$ and $a_4$ represent whether the agent 
intends to move up, down, right or left, respectively.
    \item Mutual exclusion on the intended movement of the agent, the 
boundaries of the maze and requirement that the agent must be alive are 
represented by global constraint function $C$ where 
$C(\langle s_1, \dots, s_6, a_1, \dots, a_4\rangle) = true$ 
if and only if $a_1 + a_2 + a_3 + a_4 \leq 1$, $0 \leq s_1 , s_2 < N$ and 
$s_6 = true$, $C(\langle s_1, \dots, s_6, a_1, \dots, a_4\rangle) = false$ 
otherwise.
    \item The starting location of the agent, location of the enemy, 
whether the key is currently obtained or not, and whether the agent is 
currently alive or not are represented by the initial 
state function $I$ where $I(\langle s_1, \dots, s_6\rangle) = true$ 
if and only if $s_1 = V_{s_1}$, $s_2 = V_{s_2}$, $s_3 = V_{s_3}$, 
$s_4 = V_{s_4}$, $s_5 = false$ and $s_6 = true$; 
$I(\langle s_1, \dots, s_{N^2}\rangle) = false$ otherwise where $V_{s_1}$ 
and $V_{s_2}$ denote the location of the agent, and $V_{s_3}$ 
and $V_{s_4}$ denote the location of the enemy.
    \item Final location of the agent (i.e., the location of the door), 
requirement that the agent must be alive and requirement 
that the key must be obtained are represented by the goal state 
function $G$ where $G(\langle s_1, \dots, s_6\rangle) = true$ if and only 
if $s_1 = V'_{s_1}$, $s_2 = V'_{s_2}$, $s_5 = true$ and $s_6 = true$
$G(\langle s_1, \dots, s_6\rangle) = false$ otherwise where $V'_{s_1}$ 
and $V'_{s_2}$ denote the goal location of the agent.
    \item The cost objective is to minimize total number of intended movements by 
the agent and is represented by the reward function $R$ where 
$R(\langle s_1, \dots, s_6, a_1, \dots, a_4\rangle) = a_1 + a_2 + a_3 + a_4$.
    \item Next location of the agent, next location of the enemy, 
whether the key will be obtained or not, and whether the agent will 
be alive or not are represented by the state 
transition function $T$ that is a nonlinear function of state and action 
variables $s_1, \dots, s_6, a_1, \dots, a_4$. The next location of the 
agent is a function of whether it is alive or not, its previous location 
and its intended movement. The next location of the enemy is a function 
of its previous location and the previous location of the agent and the 
intended movement of the agent. Whether the agent will be alive or not is 
a function of whether it was previously alive, and the
locations of the agent and the enemy. Finally, whether Cellda will have 
the key or not is a function of whether it previously had the key and its 
location.
\end{itemize}

Realistically, the enemy has an adversarial deterministic policy that 
is unknown to Cellda which will try to minimize the total Manhattan distance 
between itself and Cellda by breaking the symmetry first in vertical axis. 
The complete description of this domain can be found in \ref{app:dom}.
Given that the state transition function $T$ that describes the location of the enemy 
must be learned, a planner that fails to learn the adversarial policy of the enemy 
(e.g., $\pi_1$ as visualized in 
Figure (\ref{fig:cellda_naive_0}-\ref{fig:cellda_naive_2})) can get hit by the enemy.  
In contrast, a planner that learns the adversarial policy of the enemy 
(e.g., $\pi_2$ as visualized in Figure (\ref{fig:cellda_learn_0}-\ref{fig:cellda_learn_2})) 
avoids getting hit by the enemy in this scenario by waiting for two time steps to trap 
her enemy, who will try to move up for the remaining time steps and fail to intercept 
Cellda. To solve this problem, next we describe a learning and planning framework that 
(i) learns an unknown transition function $T$ from data, and (ii) plans optimally with 
respect to the learned deterministic factored planning problem.

\subsection{Factored Planning with Learned Transition Models using Deep Neural Networks and Mixed-Integer Linear Programming}

Factored planning with DNN learned transition models is a two-stage 
framework for learning and solving nonlinear factored planning 
problems as first introduced in HD-MILP-Plan~\cite{Say2017} that we briefly review now. 
Given samples of state transition data, the first stage of HD-MILP-Plan 
learns the transition function $\tilde{T}$ using a DNN with Rectified Linear Units 
(ReLUs)~\cite{Nair2010} and linear activation units. In the second stage, 
the learned transition function $\tilde{T}$ is used to construct the 
learned factored planning problem $\tilde{\Pi} = \langle S,A,C,\tilde{T},I,G,R \rangle$. 
That is, the trained DNN with fixed weights is used to predict 
the values $\bar{S}^{t+1}$ of state variables at time step $t+1$ for values $\bar{S}^{t}$ of state variables and values $\bar{A}^{t}$ of action variables at time step $t$ such that 
$\tilde{T}(\bar{S}^{t}{}^\frown \bar{A}^{t}) = \bar{S}^{t+1}$. 
As visualized in Figure~\ref{fig:hdmilpplan}, the learned transition function 
$\tilde{T}$ is sequentially chained over horizon $t\in \{1,\dots,H\}$, and 
compiled into a Mixed-Integer Linear Program yielding the planner 
HD-MILP-Plan~\cite{Say2017}. Since HD-MILP-Plan utilizes only ReLUs and linear activation 
units in its learned transition models, the state variables $s\in S$ are restricted to 
have only continuous domains $D_{s} \subseteq \mathbb{R}$.

\begin{figure}[t!]
\centering
\includegraphics[width=\linewidth]{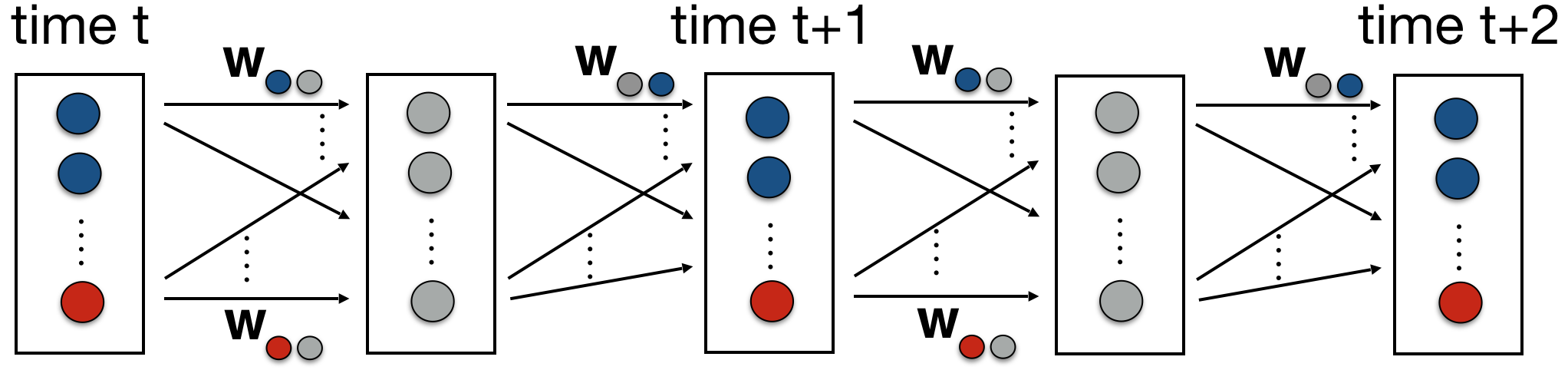}
\caption{Visualization of the learning and planning framework~\cite{Say2017}, 
where blue circles represent state 
variables $S$, red circles represent action variables $A$, gray 
circles represent hidden units (i.e., ReLUs and linear activation units for HD-MILP-Plan~\cite{Say2017}, and binary hidden units for 
FD-SAT-Plan+ and FD-BLP-Plan+) and $\textbf{w}$ represent the weights of a 
DNN. During 
the learning stage, the weights $\textbf{w}$ are learned from data. In 
the planning stage, the weights are fixed and the planner optimizes a 
given reward function with respect to the free action $A$ and 
state $S$ variables.}
\label{fig:hdmilpplan}
\end{figure}

Next, we describe an efficient DNN structure for 
learning discrete models, namely Binarized Neural Networks (BNNs)~\cite{Hubara2016}.

\subsection{Binarized Neural Networks}

Binarized Neural Networks (BNNs) are neural networks with binary weights 
and activation functions~\cite{Hubara2016}. As a result, BNNs naturally 
learn discrete models by replacing most arithmetic operations with bit-wise 
operations. Before we describe how BNN-learned transitions relate to 
HD-MILP-Plan in Figure~\ref{fig:hdmilpplan}, we first provide a technical 
description of the BNN architecture, where BNN layers are stacked in the 
following order:

\paragraph{Real or Binary Input Layer}
Binary units in all layers, with the exception of the first layer, 
receive binary input. When the inputs of the first layer are non-binary 
domains, signed binary representations up to $m$ bits of precision are 
used~\cite{Hubara2016}. For example, the integer value $\tilde{x} = -93$ can be 
represented using $m=8$ bits of precision as $\langle x^{8} = 1, x^{7} = 0, 
x^{6} = 1, x^{5} = 0, x^{4} = 0, x^{3} = 0, x^{2} = 1, x^{1} = 1\rangle$ using 
the formula $\tilde{x}=-2^{m-1}x^{m} + \sum_{i=1}^{m-1} 2^{i-1}x^{i}$. Therefore 
in the remaining of the paper, we assume the inputs of the BNNs have Boolean 
domains representing the binarized domains of the original inputs with $m$ bits 
of precision.


\paragraph{Binarization Layer}
Given input $x_{j,l}$ of binary unit 
$j\in J(l)$ at layer $l\in \{1,\dots,L\}$ the deterministic activation function 
used to compute output $y_{j,l}$ is: $y_{j,l}=1$ if $x_{j,l}\geq 0$, $-1$ otherwise, 
where $L$ denotes the number of layers and $J(l)$ denotes the set of binary 
units in layer $l\in \{1,\dots,L\}$. 

\paragraph{Batch Normalization Layer}
For all layers $l \in \{1,\dots,L\}$, Batch Normalization~\cite{Ioffe2015} is 
a method for transforming the weighted sum of outputs at layer $l-1$ in 
$\Delta_{j,l} = \sum_{i\in J(l-1)} w_{i,j,l-1} y_{i,l-1}$ to input $x_{j,l}$ 
of binary unit $j\in J(l)$ at layer $l$ using the formula: 
$x_{j,l} = \frac{\Delta_{j,l} - \mu_{j,l}} 
{\sqrt[]{\sigma^{2}_{j,l} + \epsilon_{j,l}}}\gamma_{j,l} + \beta_{j,l}$ where 
parameters $w_{i,j,l-1}$, $\mu_{j,l}$, $\sigma^{2}_{j,l}$, $\epsilon_{j,l}$, 
$\gamma_{j,l}$ and $\beta_{j,l}$ denote the weight, input mean, input variance, 
numerical stability constant (i.e., epsilon), input scaling and input bias 
respectively, where all parameters are computed at training time.

In order to place BNN-learned transition models in the same planning and learning 
framework of HD-MILP-Plan~\cite{Say2017}, we simply note that once the above BNN 
layers are learned, the Batch Normalization layers reduce to simple linear transforms 
(i.e., as we will show in Section~\ref{sec:bnn_clauses}, once parameters $w_{i,j,l-1}$, 
$\mu_{j,l}$, $\sigma^{2}_{j,l}$, $\epsilon_{j,l}$, $\gamma_{j,l}$ and $\beta_{j,l}$ 
are fixed, $x_{j,l}$ is a linear function of $y_{i,l-1}$). 
This results in a BNN with layers as visualized in Figure~\ref{fig:hdmilpplan}, 
where (i) all weights $\mathbf{w}$ are restricted to either +1 or -1 and (ii) all 
nonlinear transfer functions at BNN units are restricted to thresholded counts of 
inputs. The benefit of the BNN over the ReLU-based DNNs for 
HD-MILP-Plan~\cite{Say2017} is that it can directly model discrete variable 
transitions and BNNs can be translated to both Binary Linear Programming (BLP) and 
Weighted Partial Maximum Boolean Satisfiability (WP-MaxSAT) problems discussed next.

\subsection{Weighted Partial Maximum Boolean Satisfiability Problem}

In this work, one of the planning encodings that we focus on is
Weighted Partial Maximum Boolean Satisfiability \mbox{(WP-MaxSAT)}.   \mbox{WP-MaxSAT} is the 
problem of finding a value assignment to the variables of a Boolean formula that 
consists of hard clauses and weighted soft clauses such that:
\begin{enumerate}
    \item all hard clauses evaluate 
to true (i.e., standard SAT)~\cite{Davis1960}, and
    \item the total weight of the unsatisfied soft clauses is minimized.
\end{enumerate}

While WP-MaxSAT is known to be \textit{NP-hard}, 
state-of-the-art WP-MaxSAT solvers are experimentally shown to scale well for large 
instances~\cite{Davies2013}.

\subsection{Cardinality Networks}

When compiling BNNs to satisfiability encodings, \emph{it is critical to encode the 
counting (cardinality) threshold of the binarization layer as compactly as possible since smaller encoding sizes positively impact both compilation and optimization times}.
Cardinality Networks 
$CN_{k}^{=}(\langle x_1,\dots,x_n\rangle \rightarrow \langle c_1,\dots,c_k\rangle)$ 
provide an efficient encoding in conjunctive normal form (CNF) for counting the number 
of true assignments to Boolean variables $x_1,\dots,x_n$ using auxiliary counting 
variables $c_i$ such that $\min (\sum_{j=1}^{n}x_j, i) = \sum_{j=1}^{i} c_j$ holds 
for all $i \in \{1,\dots, k\}$ where $k$ is selected to be the smallest power of $2$ such that $k>p$~\cite{Asin2009}. As visualized in Figure~\ref{fig:CN_complete}, $CN_{k}^{=}$ is made up of three smaller building blocks, namely Half Merging (HM) Networks, Half Sorting (HS) Networks and Simplified Merging (SM) Networks, that recursively sort and merge the input Boolean variables $x_1,\dots,x_n$ with respect to their values. The detailed CNF encoding of $CN_{k}^{=}$ is outlined in 
\ref{app:cnf}.

\begin{figure}[H]
\centering
    \begin{subfigure}{.48\textwidth}
        \includegraphics[width=1\linewidth]{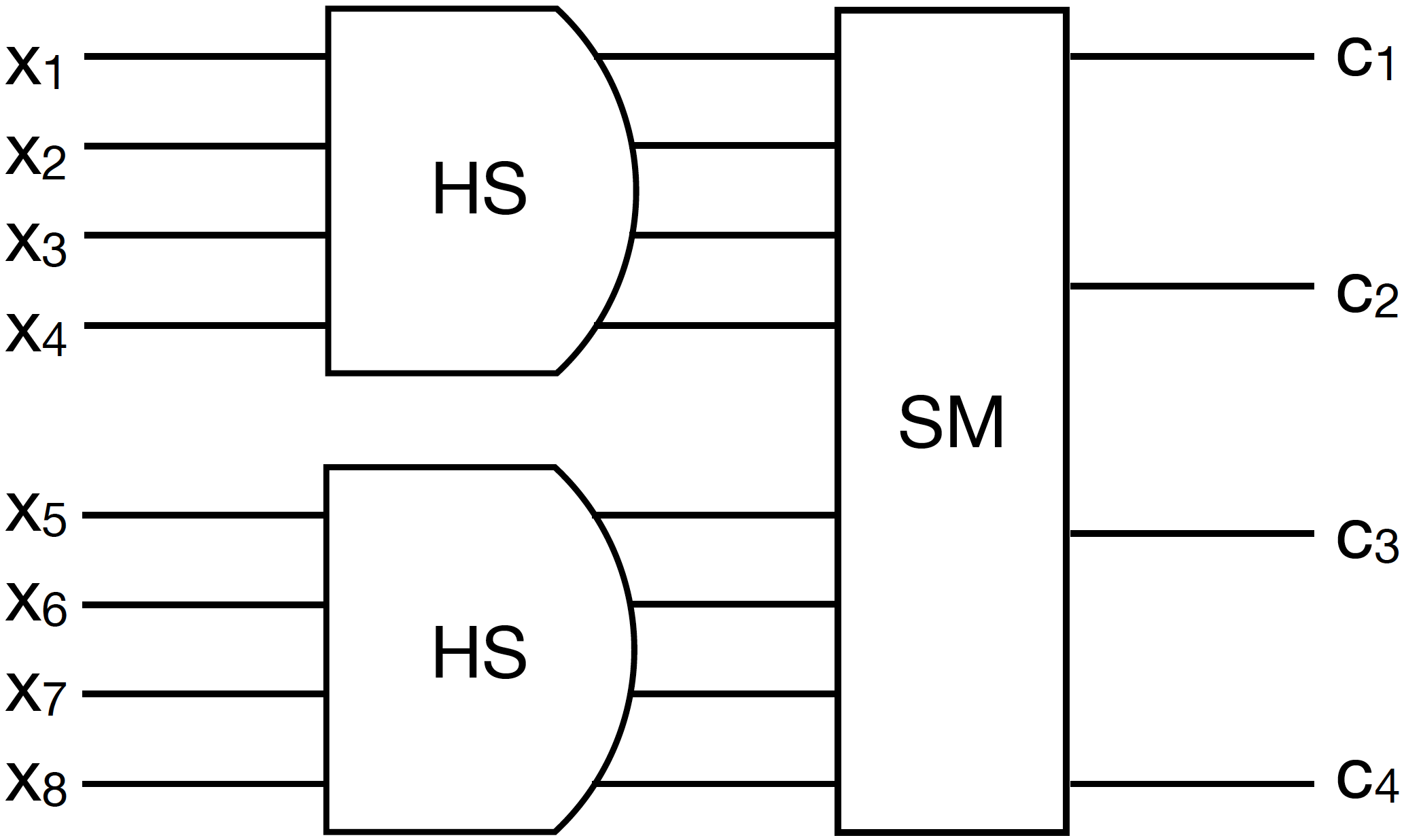}
        \caption{Cardinality Networks}
        \label{fig:CN}
    \end{subfigure}\hfill
    \begin{subfigure}{.4\textwidth}
        \includegraphics[width=1\linewidth]{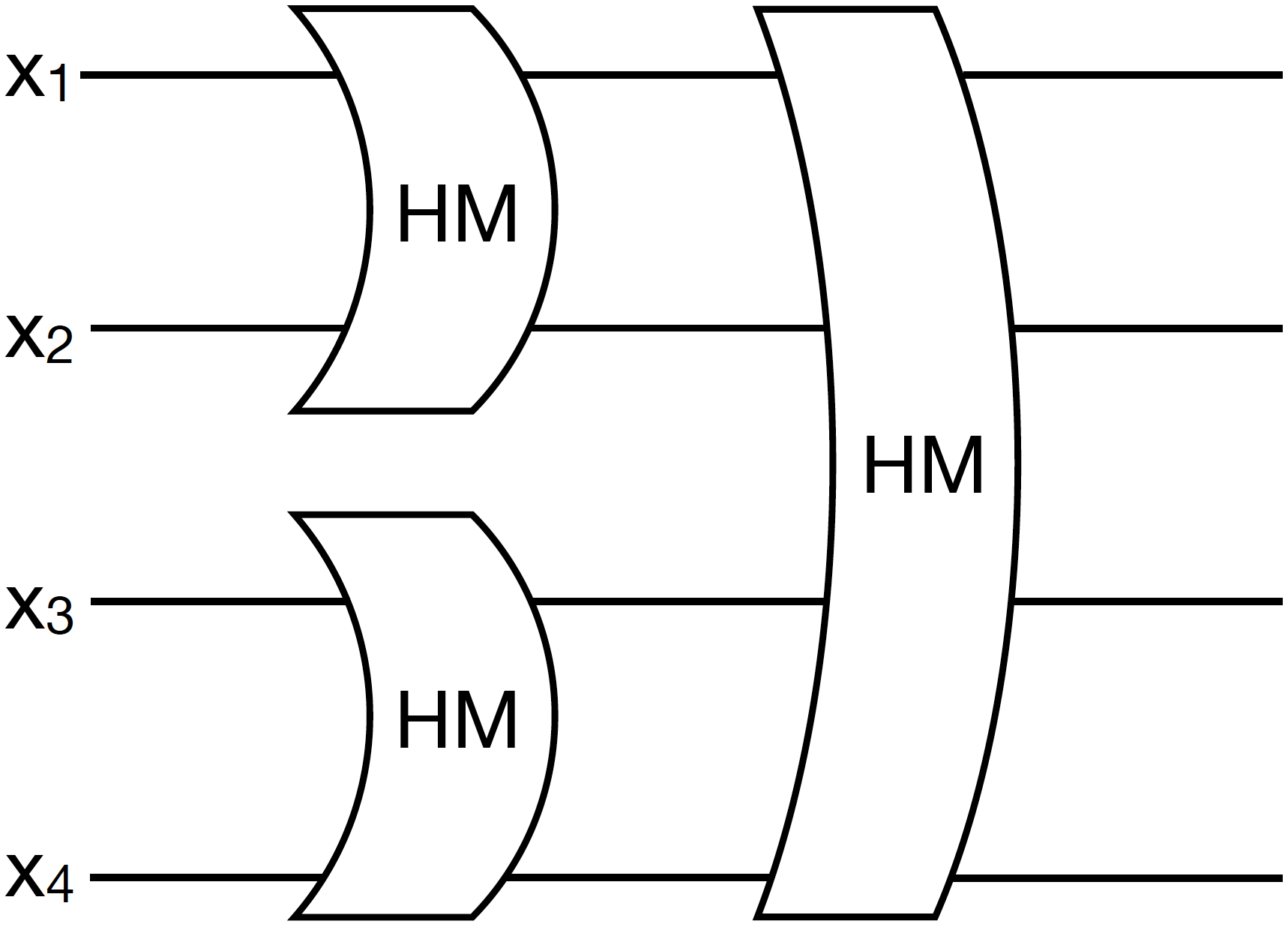}
        \caption{Half Sorting Networks}
        \label{fig:HS}
    \end{subfigure}
    \caption{Visualization of Cardinality Networks (on the left)~\cite{Asin2009} that consist of (i) Simplified Merge Networks (SM) and (ii) Half Sorting Networks (HS), and each HS further consists of Half Merging Networks (HM). An example Cardinality Network $CN_{4}^{=}(\langle x_1,\dots,x_8\rangle \rightarrow \langle c_1,\dots,c_4\rangle)$ takes in the tuple of variables $\langle x_1,\dots,x_8\rangle$ and counts upto $k=4$ variables that are assigned to true using additional auxiliary counting variables and respective hard clauses.}
    \label{fig:CN_complete}
\end{figure}

\subsection{Boolean Cardinality Constraints}

A Boolean cardinality constraint $Exactly_p(\langle x_1,\dots,x_n\rangle)$ 
describes the number of Boolean variables that are allowed to be true, and is in 
the form of $\sum_{i=1}^{n}x_i = p$. Given $CN_{k}^{=}$, 
$Exactly_p(\langle x_1,\dots,x_n\rangle)$ is defined as:
\begin{align}
Exactly_p(\langle x_1,\dots,x_n\rangle) =& \bigwedge^{r}_{i=n+1} (\neg x_i) \wedge (\neg c_{p+1}) \wedge (c_{p})\nonumber\\
 &\wedge CN_k^{=}(\langle x_1,\dots,x_{n+r}\rangle \rightarrow \langle c_1,\dots,c_{k}\rangle)\label{card1}
\end{align}
where $r$ is the smallest size of additional input variables needed to ensure 
the number of input variables is a multiple of $k$. Boolean 
cardinality constraint $Exactly_p(\langle x_1,\dots,x_n\rangle)$ is 
encoded using $O(nlog^2_2k)$ number of variables and hard clauses~\cite{Asin2009}. 

Note that the 
cardinality constraint $\sum_{i=1}^{n}x_i = p$ is equivalent to 
$\sum_{i=1}^{n} (1-x_i) = n - p$. Since Cardinality Networks 
require the value of $p$ to be less than or equal to $\frac{n}{2}$, 
Boolean cardinality constraints of the form $\sum_{i=1}^{n}x_i = p$ with 
$p>\frac{n}{2}$ must be converted into 
$Exactly_{n-p}(\langle \neg x_1,\dots,\neg x_n\rangle)$. 

Finally, a Boolean cardinality constraint encoded in CNF is said to 
be efficient if it allows efficient algorithms, such as Unit Propagation 
(UP)~\cite{Dowling1984}, to deduce the values of its variables whenever 
possible. A Boolean cardinality constraint encoded in CNF, such as 
$Exactly_{p}(\langle x_1,\dots,x_n\rangle)$, is said to be efficient 
with UP if and only if the CNF encoding allows UP to deduce:
\begin{enumerate}
    \item when exactly $p$ variables from the set $\{x_1,\dots,x_n\}$ are 
assigned to true, assignment of the remaining (unassigned) $n-p$ variables 
to false,
    \item when exactly $n-p$ variables from the set $\{x_1,\dots,x_n\}$ are 
assigned to false, assignment of the remaining (unassigned) $p$ variables 
to true,
    \item when at 
least $p + 1$ variables from the set $\{x_1,\dots,x_n\}$ are assigned to 
true, the Boolean cardinality constraint is not satisfiable, and
    \item when at 
least $n - p + 1$ variables from the set $\{x_1,\dots,x_n\}$ are assigned to 
false, the Boolean cardinality constraint is not satisfiable.
\end{enumerate}
In practice, the ability to be efficient with 
algorithms such as UP (as opposed to search) is one of 
the most important properties for the efficiency of a Boolean 
cardinality constraint encoded in 
CNF~\cite{Sinz2005,Bailleux2006,Asin2009,Jabbour2014}. 
It has been shown that $Exactly_{p}(\langle x_1,\dots,x_n\rangle)$ 
encoding is efficient with UP~\cite{Asin2009}.

\subsection{Binary Linear Programming Problem}

As an alternative to WP-MaxSAT encodings of BNN transition models, 
we can also leverage Binary Linear Programs (BLPs).
The BLP problem requires finding 
the optimal value assignment to the variables of a mathematical 
model with linear constraints, linear objective function, and binary 
decision variables. Similar to WP-MaxSAT, 
BLP is \textit{NP-hard}. The state-of-the-art 
BLP solvers~\cite{IBM2017} utilize branch-and-bound algorithms 
and can handle cardinality constraints efficiently in the size of 
its encoding.

\subsection{Generalized Landmark Constraints}

In this section, we review generalized landmark constraints that are 
necessary for improving the planning accuracy of the learned models 
through simulated or real-world interaction when the plans for the 
learned planning problem $\tilde{\Pi}$ are infeasible for the planning 
problem ${\Pi}$. A generalized landmark constraint is in the 
form of $\bigvee_{a\in A} (z_a \geq k_a)$ where the decision variable 
$z_a \in \mathbb{N}$ counts the total number of times action $a\in A$ 
is executed in the plan, and $k_a$ denotes the minimum number of times 
action $a$ must occur in a plan~\cite{Davies2015}. 
The decomposition-based planner, \textit{OpSeq}~\cite{Davies2015}, 
incrementally updates generalized landmark constraints to find 
cost-optimal plans to classical planning problems.

\section{Model Assumptions}

Before we describe our compilation-based planners for solving 
the learned planning problem $\tilde{\Pi}$, we present the set of 
assumptions used to model $\Pi$.
\begin{itemize}
    \item The deterministic 
factored planning problem $\Pi$ is static (i.e., $\Pi$ does not 
change between the time data was collected, planning, and simulation or execution).
    \item Boolean-valued functions $I$, $C$ and $G$ only take in arguments 
with Boolean domains representing the domains of state and action variables 
with $m$ bits of precision where the value of $m$ is selected prior to the 
training time of $\tilde{T}$, and is assumed to be known. Since 
functions $I$, $C$ and $G$ must always be satisfied by $\pi$, we further assume 
$I$, $C$ and $G$ can be equivalently represented by a finite set of constraints 
with $m$ bits of precision. 
Specifically, $I$ can be equivalently represented by a finite set of equality 
constraints (i.e., $I$ sets the value of every state variable $s_i \in S$ to their 
respective initial value $V_i$), and $C$ and $G$ can be represented by a finite set 
of linear constraints which are in the form of $\sum_{i=1}^{n}a_i x_i \leq p$ 
for state and action variables $x_i \in S \cup A$ where $a_i \in \mathbb{N}$ 
and $p\in \mathbb{Z}_{\geq 0}$.
    \item The reward function $R$  only takes in arguments with Boolean domains 
and is in the form of $\sum_{i=1}^{n}b_i x_i$ for state and action variables 
$x_i \in S \cup A$ with $m$ bits of precision where $a_i \in \mathbb{N}$ and 
$b_i \in \mathbb{R}_{\geq 0}$.
\end{itemize}

\section{Weighted Partial Maximum Boolean Satisfiability Compilation 
of the Learned Factored Planning Problem}

In this section, we show how to reduce the learned factored planning 
problem $\tilde{\Pi}$ with BNNs into WP-MaxSAT, which we denote as 
Factored Deep SAT Planner (FD-SAT-Plan+). FD-SAT-Plan+ uses the same 
learning and planning framework as HD-MILP-Plan~\cite{Say2017} that 
is visualized in Figure~\ref{fig:hdmilpplan} where the ReLU-based 
DNN is replaced by a BNN~\cite{Hubara2016} and the 
compilation of $\tilde{\Pi}$ is a WP-MaxSAT instead of a Mixed-Integer 
Linear Program (MILP).

\subsection{Propositional Variables}

First, we describe the set of propositional variables used in FD-SAT-Plan+.
We use three sets of propositional variables: action decision variables, 
state decision variables and BNN binary unit decision variables, where 
we use signed binary representation upto $m$ bits of precision for action 
and state variables with non-binary domains. 

\begin{itemize}
\item ${X}^{i}_{a,t}$ denotes if $i$-th bit of action variable $a\in A$ is 
executed at time step $t\in \{1,\dots,H\}$ (i.e., each bit of an action 
variable corresponds to a red circle in Figure~\ref{fig:hdmilpplan}).
\item ${Y}^{i}_{s,t}$ denotes if $i$-th bit of state variable $s\in S$ is 
true at time step $t\in \{1,\dots,H+1\}$ (i.e., each bit of a state variable corresponds to a blue circle in Figure~\ref{fig:hdmilpplan}).
\item ${Z}_{j,l,t}$ denotes if BNN binary unit $j\in J(l)$ at layer 
$l\in \{1,\dots,L\}$ is activated at time step $t\in \{1,\dots,H\}$ (i.e., each BNN binary unit corresponds to a gray circle in Figure~\ref{fig:hdmilpplan}).
\end{itemize}

\subsection{Constants and Indexing Functions}

Next we define the additional constants and indexing functions used in FD-SAT-Plan+. 
\begin{itemize}
\item $V^{i}_{s}$ is the initial (i.e., at $t=1$) value of the $i$-th 
bit of state variable $s\in S$.
\item $In(x,i)$ is the function that maps the $i$-th bit of a state or 
an action variable $x\in S \cup A$ to the corresponding binary unit in 
the input layer of the BNN such that $In(x,i)=j$ where $j\in J(1)$.
\item $Out(s,i)$ is the function that maps the $i$-th bit of a state 
variable $s\in S$ to the corresponding binary unit in the output layer 
of the BNN such that $Out(s,i)=j$ where $j\in J(L)$.
\end{itemize}

\subsection{The WP-MaxSAT Compilation}

Below, we define the WP-MaxSAT encoding of the learned factored planning 
problem $\tilde{\Pi}$ with BNNs. First, we present the hard clauses 
(i.e., clauses that must be satisfied) used in FD-SAT-Plan+.

\subsubsection{Initial State Clauses}
The following conjunction of hard clauses encodes the initial state 
function $I$.
\begin{align}
&\bigwedge_{s \in S} \bigwedge_{1 \leq i \leq m} ({Y}^{i}_{s,1} 
\leftrightarrow V^{i}_{s})\label{sat1}
\end{align}
where hard clause (\ref{sat1}) sets the initial values of the state 
variables at time step $t=1$.

\subsubsection{Bi-Directional Neuron Activation Encoding}

In this section, we present a CNF encoding to model the activation 
behaviour of a BNN binary unit $j\in J(l), l\in \{1,\dots,L\}$ that 
requires only $O(nlog^2_2k)$ variables and hard clauses, and is 
efficient with Unit Propagation (UP) where $k$ is selected to be the 
smallest power of $2$ such that $k>p$. 

Given the input variables $x_1, \dots, x_n$ and the binary activation 
function with the activation threshold $p$, the output of a binary 
unit can be efficiently encoded in CNF as follows. First, the Boolean 
variable $v$ is defined to represent the activation of a binary unit such 
that $v=true$ if and only if the binary unit is activated, and $v=false$ 
otherwise. Then, Cardinality Networks $CN_{k}^{=}$ are used to count the 
input variables $x_1, \dots, x_n$, and an additional bi-directional relation 
(i.e., $(v \leftrightarrow c_{p})$) is used to relate the counting variable 
$c_{p}$ to the output variable $v$ as follows.
\begin{align}
    &Act_{p}(v,\langle x_1, \dots, x_n\rangle) = \bigwedge^{r}_{i=n+1} (\neg x_i) \wedge (v \vee \neg c_{p}) \wedge (\neg v \vee c_{p})\nonumber\\
 &\wedge CN_k^{=}(\langle x_1,\dots,x_{n+r}\rangle \rightarrow \langle c_1,\dots,c_{k}\rangle)\label{act1}
\end{align}
In hard clause (\ref{act1}), we remind that the purpose of the additional 
input variables and the respective unit clauses is to ensure the number of 
input variables is a multiple of $k$ such that $r$ denotes the smallest size 
of the additional input variables needed. Note that 
$Act_{p}(v,\langle x_1, \dots, x_n\rangle)$ simply replaces hard clauses 
$(c_{p})$ and $(\neg c_{p+1})$ with hard clauses $(v \vee \neg c_{p})$ and 
$(\neg v \vee c_{p})$ from $Exactly_{p}(\langle x_1, \dots, x_n\rangle)$ to 
model the bi-directional relation $(v \leftrightarrow c_{p})$ (i.e., 
a neuron is activated $v=true$ if and only if the value of at least $p$ input 
variables is true $c_p = true$). 

As a result, unlike the previous work that uses $O(np)$ number of 
variables and hard clauses for encoding neuron activations in CNF, 
the encoding we present here uses only $O(nlog^2_2k)$ number of 
variables and hard clauses. For notational clarity, we will refer 
to the neuron activation encoding that is presented in this section 
as the Bi-Directional Neuron Activation Encoding, and the previous 
neuron activation encoding~\cite{Say2018} as the Uni-Directional 
Neuron Activation Encoding\footnote{The names of the encodings are 
selected to reflect the fact that the Uni-Directional encoding 
seperately encodes uni-directional the constraints 
$v \rightarrow (\sum_{i=1}^{n} x_i \geq p)$ and 
$v \leftarrow (\sum_{i=1}^{n} x_i \geq p)$ while the Bi-Directional 
Neuron Activation Encoding compactly encodes the bi-directional 
constraint $v \leftrightarrow (\sum_{i=1}^{n} x_i \geq p)$}.

Finally, the Uni-Directional Neuron Activation Encoding has been shown 
not to be efficient with UP~\cite{Boudane2018} in contrast to the 
Bi-Directional Neuron Activation Encoding which we show is efficient with 
UP in Theorem~\ref{proof:gac}. Given the Bi-Directional Neuron Activation 
Encoding, next we present the CNF clauses that model learned BNN transition 
models.

\subsubsection{BNN Clauses}
\label{sec:bnn_clauses}

Given the efficient CNF encoding $Act_{p}(v,\langle x_1, \dots, x_n\rangle)$, 
we present the conjunction of hard clauses to model the complete BNN model.
\begin{align}
&\bigwedge_{1\leq t\leq H} \bigwedge_{s \in S} 
\bigwedge_{1 \leq i \leq m} ({Y}^{i}_{s,t} \leftrightarrow {Z}_{In(s,i),1,t})\label{sat2}\\
&\bigwedge_{1\leq t\leq H} \bigwedge_{a \in A} 
\bigwedge_{1 \leq i \leq m} ({X}^{i}_{a,t} \leftrightarrow {Z}_{In(a,i),1,t})\label{sat3}\\
&\bigwedge_{1\leq t\leq H} \bigwedge_{s \in S} 
\bigwedge_{1 \leq i \leq m} ({Y}^{i}_{s,t+1} \leftrightarrow {Z}_{Out(s,i),L,t})\label{sat4}\\
&\bigwedge_{1\leq t\leq H} \bigwedge_{2\leq l\leq L} 
\bigwedge_{\substack{j \in J(l), \\ p^{*}_{j} \leq \lceil \frac{|J(l-1)|}{2} \rceil }} Act_{p_j}({Z}_{j,l,t}, \langle {Z}_{i,l-1,t} | i\in J(l-1), w_{i,j,l-1} = 1 \rangle \nonumber\\
&\quad {}^\frown \langle \neg {Z}_{i,l-1,t} | i\in J(l-1), w_{i,j,l-1} = -1 \rangle)\label{sat5}\\
&\bigwedge_{1\leq t\leq H} \bigwedge_{2\leq l\leq L} 
\bigwedge_{\substack{j \in J(l), \\ p^{*}_{j} > \lceil \frac{|J(l-1)|}{2} \rceil }} Act_{p_j}(\neg {Z}_{j,l,t}, \langle {Z}_{i,l-1,t} | i\in J(l-1), w_{i,j,l-1} = -1 \rangle \nonumber\\
&\quad {}^\frown \langle \neg {Z}_{i,l-1,t} | i\in J(l-1), w_{i,j,l-1} = 1 \rangle)\label{sat6}
\end{align}
where activation constant $p_{j}$ in hard clauses (\ref{sat5}-\ref{sat6}) 
are computed using the batch normalization parameters for binary 
unit $j\in J(l)$ in layer $l\in \{2,\dots,L\}$ at training time 
such that:
\begin{align}
&p^{*}_{j} = \Biggl\lceil \frac{|J(l-1)| + \mu_{j,l} - \frac{\beta_{j,l} \sqrt[]{\sigma^{2}_{j,l} + \epsilon_{j,l}}}{\gamma_{j,l}}}{2} \Biggr\rceil \nonumber\\
&\text{if }p^{*}_{j} \leq \lceil \frac{|J(l-1)|}{2} \rceil \text{ then } p_{j} = p^{*}_{j}\nonumber\\
&\text{else }p_{j} = |J(l-1)|-p^{*}_{j}+1\nonumber
\end{align}
where $|x|$ denotes the size of set $x$. The computation of the activation constant 
$p_{j}, j\in J(l)$ ensures that $p_{j}$ is less than or equal to the half 
size of the previous layer $|J(l-1)|$, as Bi-Directional Neuron Activation 
Encoding only counts upto $\lceil \frac{|J(l-1)|}{2} \rceil$.

Hard clauses (\ref{sat2}-\ref{sat3}) map the binary units at the input layer of 
the BNN (i.e., $l=1$) to a unique state or action variable, respectively. Similarly, 
hard clause (\ref{sat4}) maps the binary units at the output layer of the BNN 
(i.e., $l=L$) to a unique state variable. Note that because each state and action 
variable uniquely maps onto an input and/or an output BNN binary unit, constraint 
functions $I$, $C$ and $G$ limit the feasible set of values input and output units 
of the BNN can take. Hard clauses (\ref{sat5}-\ref{sat6}) encode the binary activation 
of every unit $j\in J(l)$ in the BNN given its activation constant $p_j$ and weights 
$w_{i,j,l-1}$ such that ${Z}_{j,l,t} \leftrightarrow (\sum_{\substack{i\in J(l-1),\\ 
w_{i,j,l-1}=1}} {Z}_{i,l-1,t} + \sum_{\substack{i\in J(l-1),\\ w_{i,j,l-1}=-1}} 
(1-{Z}_{i,l-1,t}) \geq p_j)$.

\subsubsection{Global Constraint Clauses}
The following conjunction of hard clauses encodes the global function $C$.
\begin{align}
&\bigwedge_{1\leq t\leq H} C(\langle {Y}^{i}_{s,t} | s \in S, 1 \leq i \leq m\rangle{}^\frown \langle {X}^{i}_{a,t} | a \in A, 1 \leq i \leq m\rangle)\label{sat7}
\end{align}
where hard clause (\ref{sat7}) represents domain-dependent global function 
on state and action variables. Some common examples of function $C$, such as exactly one Boolean action or state variable 
must be true, are respectively encoded by hard clauses (\ref{sat8}-\ref{sat9}) as follows. 
\begin{align}
&\bigwedge_{1\leq t\leq H} Exactly_{1}(\langle {X}^{1}_{a,t} | a\in A\rangle)\label{sat8}\\
&\bigwedge_{1\leq t\leq H} Exactly_{1}(\langle {Y}^{1}_{s,t} | s\in S\rangle)\label{sat9}
\end{align}

In general, linear constraints in the form of 
$\sum_{i=1}^{n} a_i x_i \leq p$, such as bounds on state and action variables, 
can be encoded in CNF where $a_i$ are positive integer coefficients 
and $x_i$ are decision variables with non-negative integer 
domains~\cite{Abio2014}.

\subsubsection{Goal State Clauses}
The following conjunction of hard clauses encodes the goal state function $G$.
\begin{align}
&G(\langle {Y}^{i}_{s,H+1} | s \in S, 1 \leq i \leq m \rangle)\label{sat10}
\end{align}
where hard clause (\ref{sat10}) sets the goal constraints on the state 
variables $S$ at time step $t=H+1$.

\subsubsection{Reward Clauses}

Given the reward function $R$ for each time step $t$ is in the form of $\sum_{s\in S}\sum_{i=1}^{m} f^{i}_{s} {Y}^{i}_{s,t+1} + \sum_{a\in A}\sum_{i=1}^{m} g^{i}_{a} {X}^{i}_{a,t}$, the following weighted soft clauses (i.e., optional 
weighted clauses that may or may not be satisfied where each weight corresponds to 
the penalty of not satisfying a clause):
\begin{align}
& \bigwedge_{1\leq t\leq H} \bigwedge_{1 \leq i \leq m} ( \bigwedge_{s\in S} (f^{i}_{s} , {Y}^{i}_{s,t+1}) \wedge \bigwedge_{a\in A} (g^{i}_{a} , {X}^{i}_{a,t}))\label{sat11}
\end{align}
can be written to represent $R$ where $f^{i}_{s}, g^{i}_{a} \in \mathbb{R}_{\geq 0}$ are the weights of the soft clauses for each bit of state and action variables, respectively.

\section{Binary Linear Programming Compilation of the Learned Factored Planning Problem}

Given FD-SAT-Plan+, we present the Binary Linear Programming (BLP) compilation of the 
learned factored planning problem $\tilde{\Pi}$ with BNNs, which we 
denote as Factored Deep BLP Planner (FD-BLP-Plan+).

\subsection{Binary Variables and Parameters}

FD-BLP-Plan+ uses the same set of decision variables and parameters 
as FD-SAT-Plan+.

\subsection{The BLP Compilation}

FD-BLP-Plan+ replaces hard clauses (\ref{sat1}) and (\ref{sat2}-\ref{sat4}) with equivalent linear 
constraints as follows.
\begin{align}
&{Y}^{i}_{s,1} = V^{i}_{s} &\forall{s \in S, 1 \leq i \leq m}\label{blp1}\\
&{Y}^{i}_{s,t} = {Z}_{In(s,i),1,t} &\forall{1\leq t\leq H, s \in S, 1 \leq i \leq m}\label{blp2}\\
&{X}^{i}_{a,t} = {Z}_{In(a,i),1,t} &\forall{1\leq t\leq H, a \in A, 1 \leq i \leq m}\label{blp3}\\
&{Y}^{i}_{s,t+1} = {Z}_{Out(s,i),L,t} &\forall{1\leq t\leq H, s \in S, 1 \leq i \leq m}\label{blp4}
\end{align}
Given the activation constant $p^{*}_{j}$ of binary unit $j\in J(l)$ in layer 
$l\in \{2,\dots,L\}$, FD-BLP-Plan+ replaces hard clauses (\ref{sat5}-\ref{sat6}) 
representing the activation of binary unit $j$ with the following linear 
constraints:
\begin{align}
&p^{*}_{j} {Z}_{j,l,t} \leq \sum_{\substack{i\in J(l-1), \\ w_{i,j,l-1} = 1}}{Z}_{i,l-1,t} + \sum_{\substack{i\in J(l-1), \\ w_{i,j,l-1} = -1}}(1-{Z}_{i,l-1,t})\nonumber\\
&\quad \forall{1\leq t\leq H, 2\leq l\leq L, j\in J(l)}\label{blp5}\\
&p'_{j}(1-{Z}_{j,l,t}) \leq \sum_{\substack{i\in J(l-1), \\ w_{i,j,l-1} = -1}}{Z}_{i,l-1,t}  + \sum_{\substack{i\in J(l-1), \\ w_{i,j,l-1} = 1}}(1-{Z}_{i,l-1,t})\nonumber\\
&\quad \forall{1\leq t\leq H, 2\leq l\leq L, j\in J(l)}\label{blp6}
\end{align}
where $p'_{j} = |J(l-1)|-p^{*}_{j}+1$.

Global constraint hard clauses (\ref{sat7}) and goal state hard clauses (\ref{sat10}) are 
compiled into linear constraints given they are in the form of 
$\sum_{i=1}^{n} a_i x_i \leq p$. Finally, the reward function $R$ 
with linear expressions is maximized over time steps $1\leq t\leq H$ such that:
\begin{align}
&\max \; \sum_{t = 1}^{H} \sum_{i = 1}^{m} ( \sum_{s\in S} f^{i}_{s}  {Y}^{i}_{s,t+1} + \sum_{a\in A} g^{i}_{a} {X}^{i}_{a,t})\label{blp7}
\end{align}


\section{Incremental Factored Planning Algorithm for FD-SAT-Plan+ and FD-BLP-Plan+}

In this section, we extend the capabilities of our data-driven planners 
and place them in a planning scenario where the planners have access 
to limited (and potentially expensive) simulated or real-world interaction. 
Given that the plans found for the learned factored planning problem $\tilde{\Pi}$ 
by FD-SAT-Plan+ and FD-BLP-Plan+ can be infeasible to the factored planning problem 
$\Pi$, we introduce an incremental algorithm for finding plans for $\Pi$ by 
iteratively excluding invalid plans from the search space of our planners. That is, 
we add hard clauses or constraints to the encodings of our planners to exclude 
infeasible plans from their respective search space.
Similar to \textit{OpSeq}~\cite{Davies2015}, FD-SAT-Plan+ and FD-BLP-Plan+ 
are updated with the following generalized landmark hard clauses or constraints:
\begin{align}
&\bigvee_{1\leq t \leq H}\bigvee_{a\in A}(\bigvee_{\substack{1 \leq i \leq m \\ \langle t,a,i \rangle\in \mathcal{L}_n}}(\neg {X}^{i}_{a,t})  \vee \bigvee_{\substack{1 \leq i \leq m \\ \langle t,a,i \rangle\not \in \mathcal{L}_n}} ({X}^{i}_{a,t})) \label{gen1}\\
&\sum_{t = 1}^{H} \sum_{a\in A} (\sum^{m}_{\substack{i = 1, \\ \langle t,a,i \rangle\in \mathcal{L}_n}}(1-{X}^{i}_{a,t}) 
+ \sum^{m}_{\substack{i = 1, \\ \langle t,a,i \rangle\not \in \mathcal{L}_n}} {X}^{i}_{a,t}) \geq 1 \label{gen2}
\end{align}
respectively, where $\mathcal{L}_n$ is the set of $1 \leq i \leq m$ bits of action variables 
$a\in A$ executed at time steps $1\leq t \leq H$ (i.e., $\bar{X}^{i}_{a,t} = 1$) at the 
$n$-th iteration of Algorithm~\ref{incremental} that is outlined below.

\begin{algorithm}
\footnotesize
\caption{Incremental Factored Planning Algorithm}\label{incremental}
\begin{algorithmic}[1]
\State $n = 1$, planner $=$ FD-SAT-Plan+ or FD-BLP-Plan+
\State $\mathcal{L}_n \gets \text{Solve } \tilde{\Pi} \text{ using the planner.}$
\If {$\mathcal{L}_n$ is empty (i.e., infeasibility) or $\mathcal{L}_n$ is a plan for $\Pi$}
\State \Return $\mathcal{L}_n$
\Else 
\If {\;$\text{planner}$ = FD-SAT-Plan+ }
\State {\;$\text{planner} \gets$ hard clause (\ref{gen1})}
\Else
\State {\;$\text{planner} \gets$ constraint (\ref{gen2})}
\EndIf
\EndIf
\State $n \gets n + 1$, go to line 2.
\end{algorithmic}
\end{algorithm}

For a given horizon $H$, Algorithm~\ref{incremental} iteratively computes a set of 
executed action variables $\mathcal{L}_n$, or returns infeasibility for the learned 
factored planning problem $\tilde{\Pi}$. If the set of executed action variables 
$\mathcal{L}_n$ is non-empty, we evaluate  
whether $\mathcal{L}_n$ is a valid plan for the original factored planning 
problem $\Pi$ (i.e., line 3) either in the actual domain or 
using a high fidelity domain simulator -- in our case RDDLsim~\cite{Sanner2010}. 
If the set of executed action variables $\mathcal{L}_n$ constitutes 
a plan for $\Pi$, Algorithm~\ref{incremental} returns $\mathcal{L}_n$ as a plan.  
Otherwise, the planner is updated with the new set of generalized 
landmarks to exclude $\mathcal{L}_n$ and the loop repeats. Since the 
original action space is discretized and represented upto $m$ bits of 
precision, Algorithm~\ref{incremental} can be shown 
to terminate in no more than $n=2^{|A|\cdot m\cdot H}$ iterations 
by constructing an inductive proof similar to the termination 
criteria of \textit{OpSeq}.
The outline of the proof can be found in \ref{app:proof}.
Next, we present the theoretical analysis of Bi-Directional Neuron Activation 
Encoding.

\section{Theoretical Results}

We now present our theoretical results on Bi-Directional Neuron Activation 
Encoding, and prove that Bi-Directional Neuron Activation Encoding is efficient 
with Unit Propagation (UP), which is considered to be one of the most important 
theoretical properties that facilitate the efficiency of a Boolean cardinality 
constraint encoded in CNF~\cite{Sinz2005,Bailleux2006,Asin2009,Jabbour2014}.

\begin{definition}[Unit Propagation Efficiency of Neuron Activation Encoding]
\label{def:gac}
A neuron activation encoding $v \leftrightarrow (\sum_{i=1}^{n} x_i \geq p)$ 
is efficient with Unit Propagation (UP) if and only if UP is sufficient 
to deduce the following:
\begin{enumerate}
\item For any set $X' \subset \{x_1, \dots, x_n\}$ with size $|X'|=n-p$,
value assignment to variables $v=true$, and $x_i=false$ for all $x_i\in X'$,  
the remaining $p$ variables from the set $\{x_1, \dots, x_n\} \backslash X'$ 
are assigned to true, 
\item For any set $X' \subset \{x_1, \dots, x_n\}$ with size 
$|X'|=p-1$, value assignment to variables $v=false$, and $x_i=true$ for all 
$x_i\in X'$, the remaining $n-p+1$ variables from the set 
$\{x_1, \dots, x_n\} \backslash X'$ are assigned to false, 
\item Partial value assignment of $p$ variables from $\{x_1, \dots, x_n\}$ to true 
assigns variable $v=true$, and 
\item Partial value assignment of $n-p+1$ variables from 
$\{x_1, \dots, x_n\}$ to false assigns variable $v=false$
\end{enumerate}
where $|x|$ denotes the size of set $x$.
\end{definition}

\begin{theorem}[Unit Propagation Efficiency of $Act_{p}(v,\langle x_1, \dots, x_n\rangle)$]
\label{proof:gac}
Bi-Directional Neuron Activation Encoding $Act_{p}(v,\langle x_1, \dots, x_n\rangle)$ is efficient with Unit Propagation.
\end{theorem}

\begin{proof}
To show $Act_{p}(v,\langle x_1, \dots, x_n \rangle)$ is efficient with Unit Propagation (UP), we need to show 
it exhaustively for all four cases of Definition~\ref{def:gac}.

Case 1 ($\forall{X' \subset \{x_1, \dots, x_n\}}$ where $|X'|=n-p$, $v = true$ and $x_i=false$ $\forall{x_i\in X'} \rightarrow x_i=true \; \forall{x_i\in \{x_1, \dots, x_n\} \backslash X'}$ by UP): When $v = true$, UP assigns $c_p = true$ using the hard clause $(\neg v \vee c_p)$. Given value assignment $x_i=false$ to variables $x_i\in X'$ for any set $X' \subset \{x_1, \dots, x_n\}$ with size $|X'|=n-p$, it has been shown that UP will set the remaining $p$ variables from the set $\{x_1, \dots, x_n\}\backslash X'$ to true using the conjunction of hard clauses that encode $CN^{=}_{k}$~\cite{Asin2009}.

Case 2 ($\forall{X' \subset \{x_1, \dots, x_n\}}$ where $|X'|=p-1$, $v = false$ and $x_i=true$ $\forall{x_i\in X'} \rightarrow x_i=false \; \forall{x_i\in \{x_1, \dots, x_n\} \backslash X'}$ by UP): When $v = false$, UP assigns $c_p = false$ using the hard clause $(v \vee \neg c_p)$. Given value assignment $x_i=true$ to variables $x_i\in X'$ for any set $X' \subset \{x_1, \dots, x_n\}$ with size $|X'|=p-1$, it has been shown that UP will set the remaining $n-p+1$ variables from the set $\{x_1, \dots, x_n\} \backslash X'$ to false using the conjunction of hard clauses that encode $CN^{=}_{k}$~\cite{Asin2009}.

Cases 3 ($\forall{X' \subset \{x_1, \dots, x_n\}}$ where $|X'|=p$, $x_i=true$ $\forall{x_i\in X'} \rightarrow v = true$ by UP) When $p$ variables from the set $\{x_1, \dots, x_n\}$ are set to true, it has been shown that UP assigns the counting variable $c_p = true$ using the conjunction of hard clauses that encode $CN^{=}_{k}$~\cite{Asin2009}. Given the assignment $c_p = true$, UP assigns $v = true$ using the hard clause $(v \vee \neg c_p)$. 

Cases 4 ($\forall{X' \subset \{x_1, \dots, x_n\}}$ where $|X'|=n-p+1$, $x_i=false$ $\forall{x_i\in X'} \rightarrow v = false$ by UP) When $n-p+1$ variables from the set $\{x_1, \dots, x_n\}$ are set to false, it has been shown that UP assigns the counting variable $c_p = false$ using the conjunction of hard clauses that encode $CN^{=}_{k}$~\cite{Asin2009}. Given the assignment $c_{p} = false$, UP assigns $v = false$ using the hard clause $(\neg v \vee c_p)$.
\end{proof}

We now discuss the importance of our theoretical result in the context of 
both related work and the contributions of our paper. Amongst the state-of-the-art 
CNF encodings~\cite{Boudane2018} that are efficient with UP for constraint 
$v \rightarrow (\sum_{i=1}^{n} x_i \geq p)$, 
Bi-Directional Neuron Activation Encoding uses the smallest number of variables and 
hard clauses. The previous state-of-the-art CNF encoding for constraint 
$v \rightarrow (\sum_{i=1}^{n} x_i \geq p)$ is an extension of the Sorting 
Networks~\cite{Een06} and uses $O(nlog^2_2n)$ number of variables and hard 
clauses~\cite{Boudane2018}. In contrast, Bi-Directional Neuron Activation Encoding 
is an extension of the Cardinality Networks~\cite{Asin2009}, and only uses 
$O(nlog^2_2k)$ number of variables and hard clauses, and is efficient with UP as 
per Theorem~\ref{proof:gac}.

\section{Experimental Results}

In this section, we evaluate the effectiveness of factored planning 
with BNNs. First, we present the benchmark domains used to test 
the efficiency of our learning and factored planning framework with 
BNNs. Second, we present the accuracy of BNNs to learn complex state 
transition models for factored planning problems. Third, we compare 
the runtime efficiency of Bi-Directional Neuron Activation Encoding 
against the existing Uni-Directional Neuron Activation 
Encoding~\cite{Say2018}. Fourth, we test the efficiency and scalability 
of planning with FD-SAT-Plan+ and FD-BLP-Plan+ on the learned factored 
planning problems $\tilde{\Pi}$ across multiple problem sizes and horizon 
settings. Finally, we demonstrate the effectiveness of 
Algorithm~\ref{incremental} to find a plan for the factored planning 
problem $\Pi$.

\subsection{Domain Descriptions}

The description of four automated planning problems that are designed to 
test the planning performance of FD-BLP-Plan+ and FD-SAT-Plan+ with nonlinear 
state transition functions, and state and action variables with discrete 
domains, namely Navigation~\cite{Sanner2011}, Inventory Control~\cite{Mann2014}, 
System Administrator~\cite{Guestrin2001,Sanner2011}, and 
Cellda~\cite{Nintendo1986}, follows.

\paragraph{Navigation:} The Navigation~\cite{Sanner2011} task for an 
agent in a two-dimensional ($N$-by-$N$ where $N\in \mathbb{Z}^{+}$) maze 
is cast as an automated planning problem as follows.

\begin{itemize}
    \item Location of the agent is represented by $N^2$ state variables 
$S = \{s_1, \dots, s_{N^2}\}$ with Boolean domains where state variable 
$s_i$ represents whether the agent is located at position $i$ or not.
    \item Intended movement of the agent is represented by four action 
variables $A = \{a_1, a_2, a_3, a_4\}$ with Boolean domains where action 
variables $a_1$, $a_2$, $a_3$ and $a_4$ represent whether the agent 
intends to move up, down, right or left, respectively.
    \item Mutual exclusion on the intended movement of the agent is 
represented by global function $C$ where 
$C(\langle s_1, \dots, s_{N^2}, a_1, \dots, a_4\rangle) = true$ 
if and only if $a_1 + a_2 + a_3 + a_4 \leq 1$, 
$C(\langle s_1, \dots, s_{N^2}, a_1, \dots, a_4\rangle) = false$ otherwise.
    \item Current location of the agent is represented by the initial 
state function $I$ where $I(\langle s_1, \dots, s_{N^2}\rangle) = true$ 
if and only if $s_i = V_i$ for all positions $i\in \{1,\dots,{N^2}\}$, 
$I(\langle s_1, \dots, s_{N^2}\rangle) = false$ otherwise.
    \item Final location of the agent is represented by the goal state 
function $G$ where $G(\langle s_1, \dots, s_{N^2}\rangle) = true$ if and 
only if $s_i = V'_i$ for all positions $i\in \{1,\dots,{N^2}\}$, 
$G(\langle s_1, \dots, s_{N^2}\rangle) = false$ otherwise where $V'_i$ 
denotes the goal position of the agent (i.e., $V'_i = true$ if and only if 
position $i\in \{1,\dots,{N^2}\}$ is the final location, $V'_i = false$ 
otherwise).
    \item Objective is to minimize total number of intended movements by 
the agent and is represented by the reward function $R$ where 
$R(\langle s_1, \dots, s_{N^2}, a_1, \dots, a_4\rangle) = a_1 + a_2 + a_3 + a_4$.
    \item Next location of the agent is represented by the state 
transition function $T$ that is a nonlinear function of state and action 
variables $s_1, \dots, s_{N^2}, a_1, \dots, a_4$. For each position 
$i\in \{1,\dots,{N^2}\}$, next location of the agent is defined by the 
function $T_{i}(\langle s_1, \dots, s_{N^2}, a_1, \dots, a_4\rangle) =$ 
if $r(i,j,k) \wedge s_j \wedge a_k$ then $true$, otherwise $false$ where 
$r(i,j,k)$ denotes whether position $i$ can be reached from 
position $j\in \{1,\dots,{N^2}\}$ by intended movement $k\in \{1,2,3,4\}$.
\end{itemize}

We report the results on maze sizes $N=3,4,5$ over 
planning horizons $H=4,\dots,10$. Note that this automated planning 
problem is a deterministic version of its original from 
IPPC2011~\cite{Sanner2011}.

\paragraph{Inventory Control:} The Inventory Control~\cite{Mann2014} is 
the task of managing inventory of a product with demand cycle length $N\in \mathbb{Z}^{+}$, 
and is cast as an automated planning problem as follows.

\begin{itemize}
    \item Inventory level of the product, phase of the demand cycle and 
whether inventory demand level is met or not are represented by three state 
variables $S = \{s_1, s_2, s_3\}$ where state variables $s_1, s_2$ have 
non-negative integer domains and state variable $s_3$ has a Boolean domain, 
respectively.
    \item Ordering fixed amount of inventory is represented by an action 
variable $A = \{a_1\}$ with a Boolean domain.
    \item Meeting the inventory demand level is represented by 
global function $C$ where $C(\langle s_1, s_2, s_3, a_1\rangle) = true$ 
if and only if $s_3 = true$, $C(\langle s_1, s_2, s_3, a_1\rangle) = false$ 
otherwise.
    \item Current inventory level, current step of the 
demand cycle and meeting the current inventory demand level are 
represented by the initial state function $I$ where 
$I(\langle s_1, s_2, s_3\rangle) = true$ if and only if $s_1 = V$, 
$s_2 = 0$ and $s_3 = true$, $I(\langle s_1, s_2, s_3\rangle) = false$ 
otherwise where $V$ denotes the current inventory level.
    \item  Meeting the final inventory demand level is 
represented by goal state function $G$ where 
$G(\langle s_1, s_2, s_3\rangle) = true$ if and only if 
$s_3 = true$, $G(\langle s_1, s_2, s_3\rangle) = false$ otherwise.
    \item Objective is to minimize total inventory storage 
cost and is represented by the reward function $R$ where 
$R(\langle s_1, s_2, s_3, a_1 \rangle) = c s_1$ and $c$ denotes the 
unit storage cost of inventory.
    \item Next inventory level, next step of the demand cycle and 
whether the next inventory demand level is met or not are represented 
by the state transition function $T$ that is a nonlinear 
function of state and action variables $s_1, s_2, s_3, a_1$. The next 
inventory level is defined by the function 
$T_{1}(\langle s_1, s_2, s_3, a_1\rangle) = \max(r a_1 + s_1 - d(s_2), 0)$ 
where $r$ and $d(i)$ are the fixed order amount and the demand at the $i$-th 
step of the demand cycle, respectively. The next step of the 
demand cycle is defined by the function 
$T_{2}(\langle s_1, s_2, s_3, a_1\rangle) =$ if $s_2 < N$ then $s_2 + 1$, 
otherwise $0$. Finally, whether the next inventory demand level is met or 
not is defined by the function $T_{3}(\langle s_1, s_2, s_3, a_1\rangle) =$ 
if $r a_1 + s_1 - d(s_2) \leq d^{min}$ then $false$, 
otherwise $true$ where $d^{min}$ is the minimum allowable unmet demand.
\end{itemize}

We report the results on Inventory Control tasks with two 
demand cycle lengths $N \in \{2,4\}$ over planning horizons 
$H=5,\dots,8$.

\paragraph{System Administrator:} The System 
Administrator~\cite{Guestrin2001,Sanner2011} is the maintenance task of 
a computer network of size $N$ and is cast as an automated planning 
problem as follows.

\begin{itemize}
    \item The age of computer $i\in \{1,\dots,N\}$, and whether 
computer $i\in \{1,\dots,N\}$ is running or not, are represented 
by $2N$ state variables $S = \{s_1, \dots, s_{2N}\}$ with 
non-negative integer domains and Boolean domains, respectively.
    \item Rebooting computers $i\in \{1,\dots,N\}$ are represented 
by $N$ action variables $A = \{a_1, \dots, a_{N}\}$ with Boolean 
domains.
    \item Bound on the number of computers that can be rebooted 
and the requirement that all computers must be running are 
represented by global function $C$ where 
$C(\langle s_1, \dots, s_{2N}, a_1, \dots, a_{N}\rangle) = true$ 
if and only if $\sum_{i=1}^{N} a_{i} \leq a^{max}$ and 
$s_{N+1}, \dots, s_{2N} = true$, 
$C(\langle s_1, \dots, s_{2N}, a_1, \dots, a_{N}\rangle) = false$ 
otherwise where $a^{max}$ is the bound on the number of computers 
that can be rebooted at a time.
    \item Current age of computer $i\in \{1,\dots,N\}$, and 
whether computer $i\in \{1,\dots,N\}$ is currently running 
or not are represented by the initial state function $I$ where 
$I(\langle s_1, \dots, s_{2N}\rangle) = true$ if and only if 
$s_{1}, \dots, s_{N} = 0$ and $s_{N+1}, \dots, s_{2N} = true$, 
$I(\langle s_1, \dots, s_{2N}\rangle) = false$ otherwise.
    \item Final requirement that all computers must be 
running is represented by the goal state function $G$ where 
$G(\langle s_1, \dots, s_{2N}\rangle) = true$ if and only if 
$s_{N+1}, \dots, s_{2N} = true$, 
$G(\langle s_1, \dots, s_{2N}\rangle) = false$ otherwise.
    \item Objective is to minimize total number of computer 
reboots and is represented by the reward function $R$ where 
$R(\langle s_1, \dots, s_{2N}, a_1, \dots, a_{N}\rangle) = 
\sum_{i=1}^{N} a_{i}$.
    \item Next age of computer $i\in \{1,\dots,N\}$ and whether 
computer $i\in \{1,\dots,N\}$ will be running or not, are represented 
by the state transition function $T$ that is a nonlinear function 
of state and action variables $s_1, \dots, s_{2N}, a_1, \dots, a_{N}$. 
For each computer $i\in \{1,\dots,{N}\}$, next age of computer is 
defined by the function 
$T_{i}(\langle s_1, \dots, s_{2N}, a_1, \dots, a_{N}\rangle) =$ 
if $\neg s_{i+N} \vee a_i$ then $0$, otherwise $s_i + 1$. For each 
computer $i\in \{1,\dots,{N}\}$, whether the computer will be running 
or not is defined by the function 
$T_{i+N}(\langle s_1, \dots, s_{2N}, a_1, \dots, a_{N}\rangle) =$ 
if $a_{i} \vee (s_{i+N} \wedge s_{i} \leq s^{max}) \vee (s_{i+N} \wedge 
s_{i} \cdot \left( 1-\frac{\sum_{j=1}^{N} c(i,j)s_{j+N} }{1 + \sum_{j=1}^{N} 
c(i,j)} \right) \leq d^{max})$ then $true$, otherwise $false$ where 
$c(i,j)$ denotes whether computers $i$ and $j\in \{1,\dots,{N}\}$ are 
connected or not, $d^{max}$ denotes the network density threshold, and 
$s^{max}$ is the maximum computer age.
\end{itemize}

We report the results on System Administrator tasks with 
$N \in \{4,5\}$ computers over planning horizons $H=2,3,4$.

\paragraph{Cellda:} As previously described in 
Section~\ref{sec:example_domain}, 
the agent Cellda must escape a dungeon through an initially locked door 
by obtaining its key without getting hit by her enemy. The gridworld-like 
dungeon is made up of two types of cells: (i) regular cells on which 
Cellda and her enemy can move from/to deterministically up, down, right, 
left, or wait on, and (ii) blocks that neither Cellda nor her enemy can 
stand on.

We report the results on maze size $N=4$ over planning horizons 
$H=8,\dots,12$ with two different enemy policies.

\subsection{Transition Learning Performance}

In Table~\ref{tab:testerror}, we present test errors 
for different configurations of the BNNs on each domain 
instance where the sample data was generated from the 
RDDL-based domain simulator RDDLsim~\cite{Sanner2010} 
using the code available for stochastic 
exploration policy with concurrent actions. For each 
instance of a domain, total of 200,000 state transition 
samples were collected and the data was treated as 
independent and identically distributed. After random 
permutation, the data was split into training and test 
sets with 9:1 ratio. The BNNs with the feed-forward 
structure described in Section 2.4 were trained on 
MacBookPro with 2.8 GHz Intel Core i7 16 GB memory using 
the code available~\cite{Hubara2016}. For each instance 
of a domain, the smallest BNN size (i.e., the BNN with 
the least number of neurons) that achieved less than a 
preselected test error threshold (i.e., 3\% 
test error) was chosen using a grid-search over 
preselected network structure hyperparameters, namely 
width (i.e., 36, 96, 128) and depth (i.e., 1,2,3). 
The selected BNN structure for each instance is detailed 
in Table~\ref{tab:testerror}.
Overall, Navigation instances required the smallest BNN 
structures for learning due to their purely Boolean 
state and action spaces, while both Inventory, SysAdmin 
and Cellda instances required larger BNN structures 
for accurate learning, owing to their non-Boolean state 
spaces.

\begin{table}[H]
  \centering
  \caption{Transition Learning Performance Table 
measured by error on test data (in \%) for all 
domains and instances.}
  \label{tab:testerror}
  \begin{tabular}{| l | c | c |}
    \hline
    Domain & Network Structure & Test Error (\%) \\
    \hline
    Navigation(3)& 13:36:36:9 & 0.0 \\ \hline
    Navigation(4)& 20:96:96:16 & 0.0 \\ \hline
    Navigation(5)& 29:128:128:25 & 0.0 \\ \hline
    Inventory(2)& 7:96:96:5 & 0.018 \\ \hline
    Inventory(4)& 8:128:128:5 & 0.34 \\ \hline
    SysAdmin(4)& 16:128:128:12 & 2.965 \\ \hline
    SysAdmin(5)& 20:128:128:128:15 & 0.984 \\ \hline
    Cellda(x)& 12:128:128:4 & 0.645 \\ \hline
    Cellda(y)& 12:128:128:4 & 0.65 \\ \hline
  \end{tabular}
\end{table}

\subsection{Planning Performance on the Learned Factored Planning Problems}

In this section, we present the results of two computational comparisons. 
First, we test the efficiency of Bi-Directional Neuron Activation 
Encoding to the existing Uni-Directional Neuron Activation 
Encoding~\cite{Say2018} to 
select the best WP-MaxSAT-based encoding for FD-SAT-Plan+. Second, we 
compare the effectiveness of using the selected WP-MaxSAT-based encoding 
against a BLP-based encoding, namely FD-SAT-Plan+ and 
FD-BLP-Plan+, to find plans for the learned factored planning 
problem $\tilde{\Pi}$. We ran the experiments on a MacBookPro 
with 2.8 GHz Intel Core i7 16GB memory. For FD-SAT-Plan+ and 
FD-BLP-Plan+, we used MaxHS~\cite{Davies2013} with underlying 
LP-solver CPLEX 12.7.1~\cite{IBM2017}, and CPLEX 12.7.1 
solvers respectively, with 1 hour total time limit per 
domain instance.

\subsubsection{Comparison of neuron activation encodings}

\begin{figure}[H]
    \includegraphics[width=1\linewidth]{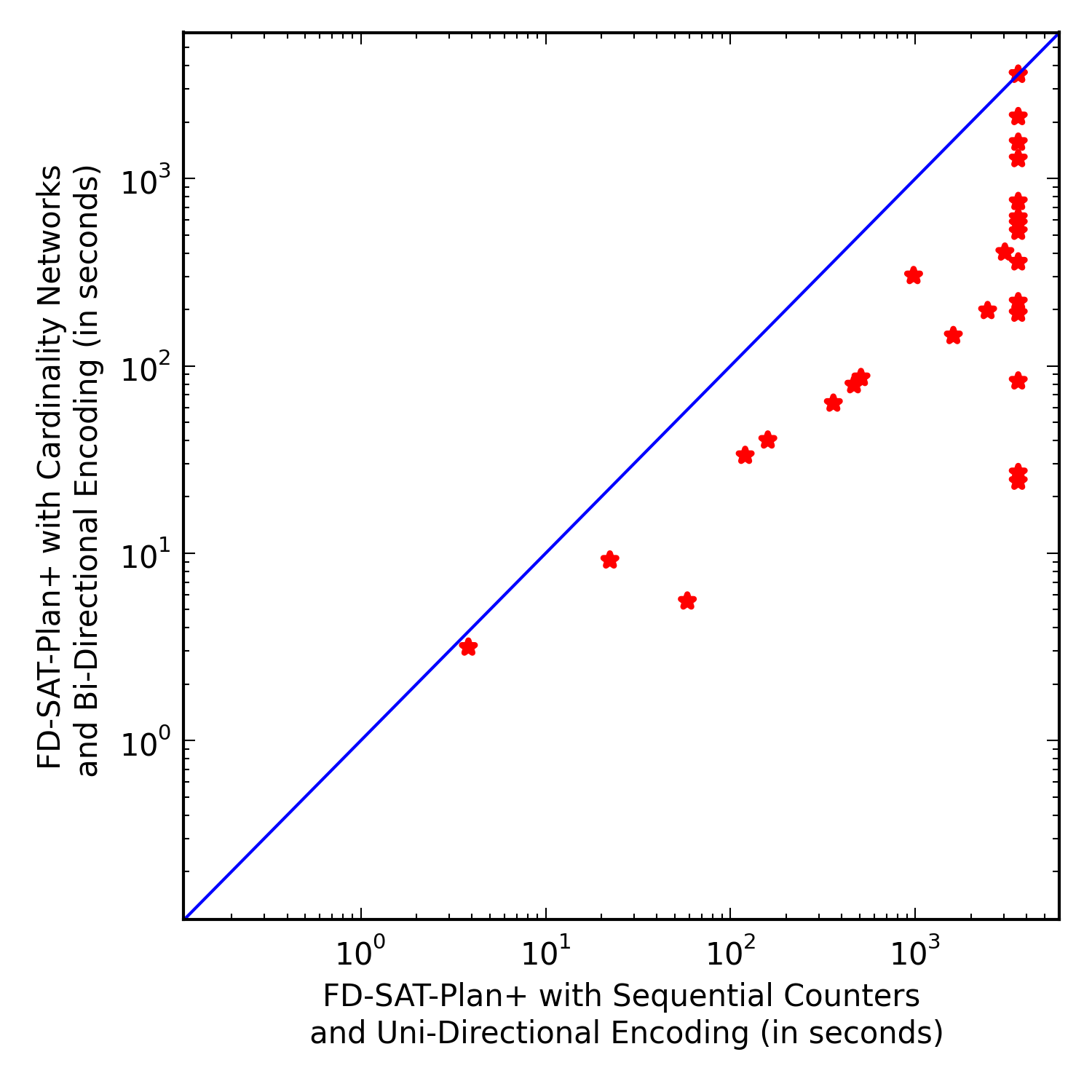}
    \caption{Timing comparison between for FD-SAT-Plan+ with Sequential Counters~\cite{Sinz2005} and Uni-Directional Encoding~\cite{Say2018} (x-axis) and Cardinality Networks~\cite{Asin2009} and Bi-Directional Encoding (y-axis). Over all problem settings, FD-SAT-Plan+ with Cardinality Networks and Bi-Directional Encoding signigicantly outperformed FD-SAT-Plan+ with Sequential Counters and Uni-Directional Encoding on all problem instances due to its (i) smaller encoding size, and (ii) UP efficiency property.}
  \label{fig:Bi_vs_Uni}
\end{figure}

The runtime efficiency of both neuron activation encodings are tested 
for the learned factored planning problems over 27 problem instances where 
we test our Bi-Directional Neuron Activation Encoding that utilizes Cardinality 
Networks~\cite{Asin2009} against the previous Uni-Directional 
Neuron Activation Encoding~\cite{Say2018} that uses Sequential Counters~\cite{Sinz2005}. 

Figure~\ref{fig:Bi_vs_Uni} visualizes the runtime comparison of both 
neuron activation encodings. The inspection of Figure~\ref{fig:Bi_vs_Uni} 
clearly demonstrate that FD-SAT-Plan+ with Bi-Directional Neuron Activation 
Encoding signigicantly outperforms FD-SAT-Plan+ with Uni-Directional Neuron 
Activation Encoding in all problem instances due to its (i) smaller encoding 
size (i.e., $O(nlog^2_2k)$ versus $O(np)$) with respect to both the number 
of variables and the number of hard clauses, and (ii) UP efficiency property 
as per Theorem~\ref{proof:gac}. Therefore, we use 
FD-SAT-Plan+ with Bi-Directional Neuron Activation Encoding in the 
remaining experiments and omit the results for FD-SAT-Plan+ with 
Uni-Directional Neuron Activation Encoding.

\subsubsection{Comparison of FD-SAT-Plan+ and FD-BLP-Plan+}

Next, we test the runtime efficiency of FD-SAT-Plan+ and FD-BLP-Plan+ for solving the learned factored planning problem.

\begin{table}[H]
  \centering
  \caption{Summary of the computational results presented in \ref{app:results} including the average 
  runtimes in seconds for both FD-SAT-Plan+ and FD-BLP-Plan+ over all four domains 
  for the learned factored planning problem within 1 hour time limit.}
  \label{tab:compresults_summary1}
  \begin{tabular}{| l | c | c |}
    \hline
    Domains & FD-SAT-Plan+ & FD-BLP-Plan+ \\ \hline
    Navigation & 529.11 & 1282.82 \\ \hline
    Inventory & 54.88 & 0.54 \\ \hline
    SysAdmin & 1627.35 & 3006.27 \\ \hline
    Cellda & 344.03 & 285.45 \\ \hline
    \hline
    Coverage & 27/27 & 20/27 \\ \hline
    Optimality Proved & 25/27 & 19/27 \\ \hline
  \end{tabular}
\end{table}

In Table~\ref{tab:compresults_summary1}, we present the summary of the 
computational results including the average runtimes in seconds, the 
total number of instances for which a feasible solution is returned 
(i.e., coverage), and the total number of instances for which an optimal 
solution is returned (i.e., optimality proved), for both FD-SAT-Plan+ 
and FD-BLP-Plan+ over all four domains for the learned factored planning 
problem within 1 hour time limit. The analysis of 
Table~\ref{tab:compresults_summary1} shows that FD-SAT-Plan+ covers all
problem instances by returning an incumbent solution to the learned 
factored planning problem compared to FD-BLP-Plan+ which runs out of 1 
hour time limit in 7 out of 27 instances before finding an incumbent 
solution. Similarly, FD-SAT-Plan+ proves the optimality of the solutions 
found in 25 out of 27 problem instances compared to FD-BLP-Plan+ which 
only proves the optimality of 19 out of 27 solutions within 1 hour time 
limit.

\begin{figure}[H]
    \includegraphics[width=1\linewidth]{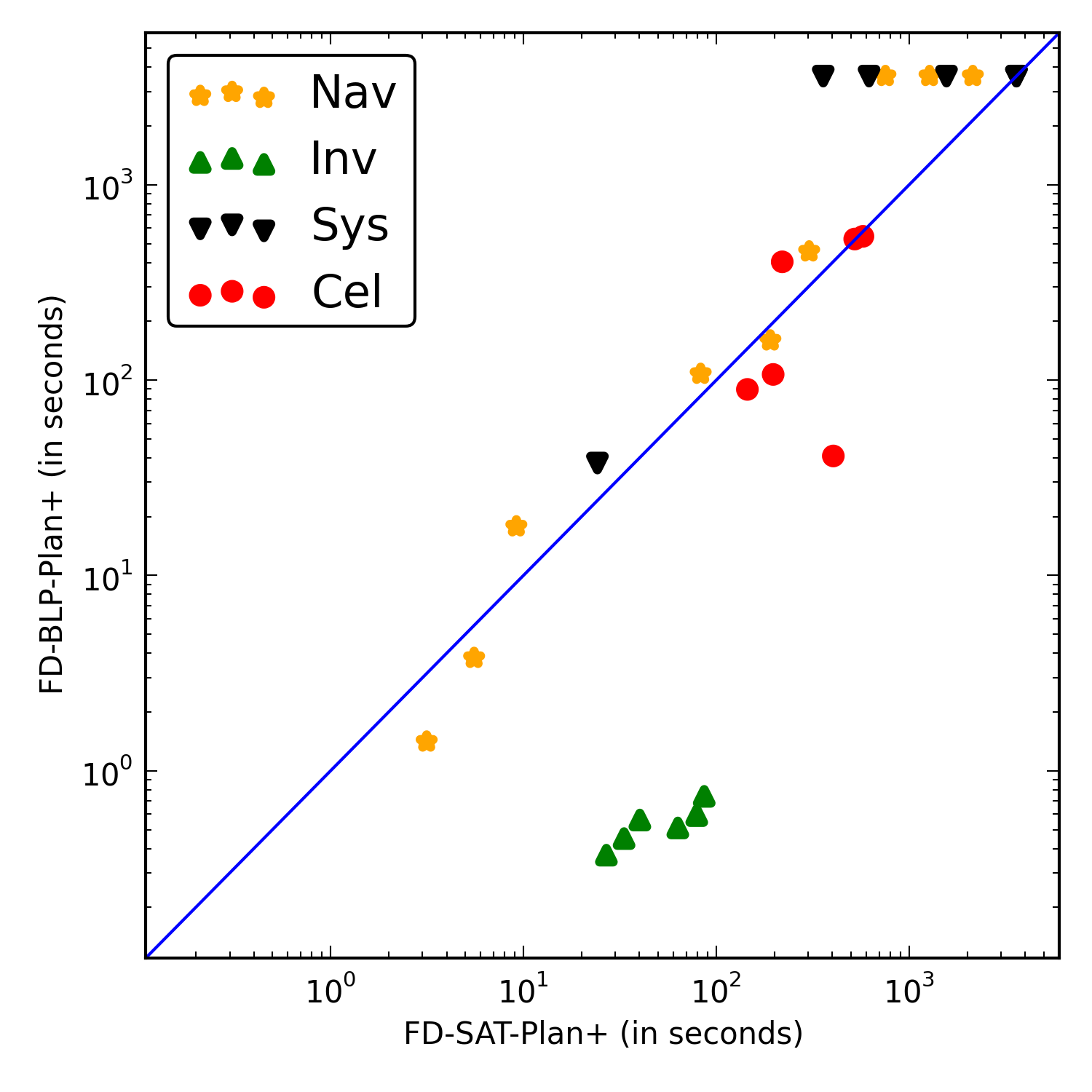}
    \caption{Timing comparison between FD-SAT-Plan+ (x-axis) and 
FD-BLP-Plan+ (y-axis). Over all problem settings, FD-BLP-Plan+ 
performed better on instances that require less than approximately 
100 seconds to solve (i.e., computationally easy instances) whereas 
FD-SAT-Plan+ outperformed FD-BLP-Plan+ on instsances that require more 
than approximately 100 seconds to solve (i.e., computationally hard 
instances).}
  \label{fig:Runtime}
\end{figure}

In Figure~\ref{fig:Runtime}, we compare the runtime performances of 
FD-SAT-Plan+ (x-axis) and FD-BLP-Plan+ (y-axis) per instance labeled 
by their domain. The analysis of Figure~\ref{fig:Runtime} across all 27 
intances shows that FD-BLP-Plan+ proved the optimality of problem 
instances from domains which require less computational demand 
(e.g., Inventory) more efficiently compared to 
FD-SAT-Plan+. In contrast, FD-SAT-Plan+ proved the optimality of problem 
instances from domains which require more computational demand (e.g., 
SysAdmin) more efficiently compared to FD-BLP-Plan+. As the instances got 
harder to solve, FD-BLP-Plan+ timed-out more compared to FD-SAT-Plan+, 
mainly due to its inability to find incumbent solutions as evident from 
Table~\ref{tab:compresults_summary1}.

The detailed inspection of Figure~\ref{fig:Runtime} and 
Table~\ref{tab:compresults_summary1} together with Table~\ref{tab:testerror} 
shows that the computational efforts required to solve the benchmark 
instances increase signigicantly more for FD-BLP-Plan+ compared to 
FD-SAT-Plan+ as the learned BNN structure gets more complex (i.e., from 
smallest BNN structure of Inventory, to moderate size BNN structures 
of Navigation and Cellda, to the largest BNN structure of SysAdmin).  
Detailed presentation of the run time results for each instance are 
provided in \ref{app:results}.

\subsection{Planning Performance on the Factored Planning Problems}

Finally, we test the planning efficiency of the incremental factored 
planning algorithm, namely Algorithm~\ref{incremental}, for solving 
the factored planning problem $\Pi$.

\begin{table}[H]
  \centering
  \caption{Summary of the computational results presented in 
  \ref{app:results} including the average runtimes in seconds 
  for both FD-SAT-Plan+ and FD-BLP-Plan+ over all four domains 
  for the factored planning problem within 1 hour time limit.}
  \label{tab:compresults_summary2}
  \begin{tabular}{| l | c | c |}
    \hline
    Domains & FD-SAT-Plan+ & FD-BLP-Plan+ \\ \hline
    Navigation & 529.11 & 1282.82 \\ \hline
    Inventory & 68.88 & 0.66 \\ \hline
    SysAdmin & 2463.79 & 3006.27 \\ \hline
    Cellda & 512.51 & 524.53 \\ \hline
    \hline
    Coverage & 23/27 & 19/27 \\ \hline
    Optimality Proved & 23/27 & 19/27 \\ \hline
  \end{tabular}
\end{table}

In Table~\ref{tab:compresults_summary2}, we present the summary of the 
computational results including the average runtimes in seconds, the 
total number of instances for which a feasible solution is returned 
(i.e., coverage), and the total number of instances for which an optimal 
solution is returned (i.e., optimality proved), for both FD-SAT-Plan+ 
and FD-BLP-Plan+ using Algorithm~\ref{incremental} over all four domains 
for the factored planning problem within 1 hour time limit. The 
analysis of Table~\ref{tab:compresults_summary2} shows that FD-SAT-Plan+ 
with Algorithm~\ref{incremental} covers 23 out of 27 problem instances 
by returning an incumbent solution to the factored planning problem 
compared to FD-BLP-Plan+ with Algorithm~\ref{incremental} which runs out 
of 1 hour time limit in 8 out of 27 instances before finding an incumbent 
solution. Similarly, FD-SAT-Plan+ with Algorithm~\ref{incremental} 
proves the optimality of the solutions found in 23 out of 27 problem 
instances compared to FD-BLP-Plan+ with Algorithm~\ref{incremental} 
which only proves the optimality of 19 out of 27 solutions within 1 hour 
time limit.

Overall, the constraint generation algorithm successfully verified the plans 
found for the factored planning problem $\Pi$ in three out of four domains with 
low computational cost. In contrast, the incremental factored planning 
algorithm spent significantly more time in SysAdmin domain as evident in 
Table~\ref{tab:compresults_summary2}. Over all instances, we observed that at 
most 5 instances required constraint generation to find a plan where the maximum 
number of constraints required was at least 6, namely for (Sys,4,3) instance. 
Detailed presentation of the run time results and the number of generalized 
landmark constraints generated for each instance are provided in \ref{app:results}.

\section{Discussion and Directions for Future Work}

In this section, we discuss the strengths and limitations of our learning 
and planning framework, focusing on the topics of (i) domain discretization, 
(ii) grounded versus lifted model learning, (iii) assumptions on the reward 
function, (iv) linearity assumptions on functions $C$, $I$, $G$, $R$, (v) 
the availability of real-world (or simulated) interaction, and (vi) exploration 
policy choices for data collection.  We also discuss opportunities for future work that can relax many of these assumptions.

\begin{enumerate}[(i)]
    \item Discretization of non-binary domains: One of the main assumptions 
we make in this work is the knowledge of $m$, which denotes the total bits of 
precision used to represent the domains of non-binary state and action variables, 
to learn the known deterministic factored planning problem $\Pi$. In order to 
avoid any assumptions on the value of $m$ in our experiments, we have limited 
our domain instances to only include state and action variables with binary and/or 
bounded integer domains. Our framework can be extended to handle variables with 
bounded continuous domains upto a fixed number of decimals where the value of the 
number of decimals is chosen at training time to learn an accurate $\tilde{T}$. 
An important area of future work here is to improve the accuracy of BNNs through 
the explicit use of the information about the original known domains of state and 
action variables. That is, the current training of BNNs treat each input and output 
unit independently. Therefore, future research that focuses on learning of BNNs 
with non-Boolean domains can further improve the effectiveness of our framework.
    \item Grounded versus lifted model learning: In this paper, we learn grounded 
representations of $\Pi$, that is, we learn a transition function $\tilde{T}$ for 
each instance of the domain. Under this assumption, we only plan in the realized 
(i.e., grounded) instances of the world over which the data is collected. For future 
work, we will investigate methods for learning and planning with lifted representations 
of problem $\Pi$.
    \item Assumptions on the reward function $R$: We have assumed the complete 
knowledge of $R$ since in many planning domains (e.g., Navigation, Inventory etc.), 
the planning objective is user-specified (e.g., in the Navigation domain the agent 
knows where it wants to go, in the Inventory domain the user knows he/she wants to 
minimize total cost etc.). Given this assumption, we have learned a transition 
function $T$ that is independent of $R$. As a result, our learning and planning 
framework is \emph{general} with respect to $R$ (i.e., planning with respect to a 
new $R$ would simply mean the modification of the set of soft clauses (\ref{sat11}) 
or the objective function (\ref{blp7})). If we assumed $R$ can also be measured and 
collected as data, our framework can easily be extended to 
handle planning for an unknown reward function $R$ by also learning $R$ from 
training data.
    \item Linearity assumptions: In order to compile the learned planning problem 
$\tilde{\Pi}$ into WP-MaxSAT and BLP, we have assumed that functions $C$, $I$, $G$, 
$R$ are linear. As a result, our compilation-based planners can be solved 
by state-of-the-art WP-MaxSAT~\cite{Davies2013} and BLP solvers~\cite{IBM2017}. 
This experimental limitation can be lifted by changing our branch-and-solver (i.e., 
for FD-BLP-Plan+) to a more general solver that can handle nonlinearities with 
optimality guarantees (e.g., a spatial branch-and-bound solver such as 
SCIP~\cite{SCIP2017}). For example, the model-based metric hybrid planner 
\textit{SCIPPlan}~\cite{Say2018b,Say2019} leverages the spatial branch-and-bound 
solver of SCIP to plan in domains with nonlinear functions $C$, $I$, $G$, and action 
and state variables with mixed (continuous and discrete) domains.
    \item Availability of real-world (or simulated) interaction: In this work, we 
assumed the presence of historical data that enables model learning but do not always 
assume the availability of online access to real-world (or simulated) feedback. Therefore, 
we have investigated two distinct scenarios for our learning and planning framework, 
namely planning, (i) without and (ii) with, the availability of real-world (or 
simulated) interaction. Scenario (i) assumes the planner must produce a plan without 
having any feedback from the real-world (or simulator). Under scenario (i), all 
information available about transition function $T$ is the learned transition function 
$\tilde{T}$. As a result, we assumed $\tilde{\Pi}$ is an accurate approximation of 
$\Pi$ (as demonstrated experimentally in Table~\ref{tab:testerror}) and planned 
optimally with respect to $\tilde{\Pi}$. Scenario (ii) assumes the limited availability 
of real-world (or potentially expensive simulated) interaction with $\Pi$. Under scenario 
(ii), the planner can correct its plans with respect to the new observations. In this 
work, we do not employ a re-training technique for $T$ since (i) $\Pi$ is static (i.e., 
$\Pi$ does not change between the time training data was collected and 
planning), and (ii) we assume the amount of training data is significantly larger than 
the newly collected data. As a result, we do not re-train the learned transition function 
$T$ but instead correct the planner using the constraint generation framework introduced 
in Section 5.
    \item Data collection: Data acquisition is an important part of machine learning 
which can directly effect the quality of the learned model. In this work, we have 
assumed the availability of data and assumed data as an input parameter to our learning 
and planning framework. In our experiments, we have considered the scenario 
where the learned BNN model can be incorrect (i.e., scenario (ii) as described previously). 
Under scenario (ii), we have investigated how to repair our planner based on its 
interaction with the real-world (or simulator). Clearly, if the model that is learned 
does not include \emph{any} feasible plans $\pi$ to $\Pi$ (i.e., there does not exist 
a plan that is both a solution to $\Pi$ and $\tilde{\Pi}$), then Algorithm~\ref{incremental} 
terminates, as shown in \ref{app:proof}. If this is the case, then it is an interesting 
direction for future work to consider how to collect data in order to accurately learn $T$. 
However, considerations of such methods (e.g., active learning, investigation of different 
exploration policies etc.) are orthogonal to the scope of the technical contributions of 
this paper.
\end{enumerate}

\section{Conclusion}

In this work, we utilized the efficiency and ability of BNNs to 
learn complex state transition models of factored planning domains 
with discretized state and action spaces.  We introduced two novel 
compilations, a WP-MaxSAT (FD-SAT-Plan+) and a BLP (FD-BLP-Plan+) 
encodings, that directly exploit the structure of BNNs to plan for 
the learned factored planning problem, which provide optimality 
guarantees with respect to the learned model when they successfully 
terminate. Theoretically, we have shown that our SAT-based Bi-Directional 
Neuron Activation Encoding is asymptotically the most compact encoding 
in the literature, and is efficient with Unit Propagation (UP), which 
is one of the most important efficiency indicators of a SAT-based encoding. 

We further introduced a finite-time incremental factored 
planning algorithm based on generalized landmark constraints that 
improve planning accuracy of both FD-SAT-Plan+ and FD-BLP-Plan+ through 
simulated or real-world interaction.
Experimentally, we demonstrate the computational efficiency of our 
Bi-Directional Neuron Activation Encoding in comparison to the 
Uni-Directional Neuron Activation Encoding~\cite{Say2018}. Overall, 
our empirical results showed we can accurately learn complex state 
transition models using BNNs and demonstrated strong performance in both 
the learned and original domains. In sum, this work provides two novel 
and efficient factored state and action transition learning and planning 
encodings for BNN-learned transition models, thus providing new and
effective tools for the data-driven model-based planning community.

\section{Acknowledgements}

This work has been funded by the Natural Sciences and Engineering Research Council (NSERC) of Canada.

\appendix
\addcontentsline{toc}{section}{Appendices}
\section*{Appendices}

\section{CNF Encoding of the Cardinality Networks}
\label{app:cnf}

The CNF encoding of Cardinality Networks ($CN_{k}^{=}$) is as follows~\cite{Asin2009}.

\paragraph{Half Merging Networks} Given two inputs of Boolean variables 
$x_1, \dots, x_{n}$ and $y_1, \dots, y_n$, Half Merging (HM) Networks 
merge inputs into a single output of size $2n$ using the CNF encoding as 
follows.

For input size $n=1$:
\begin{align}
&HM(\langle x_1\rangle, \langle y_1 \rangle \rightarrow \langle c_1, c_2\rangle) 
= (\neg x_1 \vee \neg y_1 \vee c_2) 
\wedge (\neg x_1 \vee c_1) \wedge (\neg y_1 \vee c_1)\nonumber\\
&\wedge (x_1 \vee y_1 \vee \neg c_1) \wedge (x_1 \vee \neg c_2) \wedge (y_1 \vee \neg c_2) 
\label{card3}
\end{align}

For input size $n>1$:
\begin{align}
&HM(\langle x_1, \dots, x_n \rangle, \langle y_1, \dots, y_n\rangle \rightarrow 
\langle d_1, c_2,\dots,c_{2n-1}, e_{n}\rangle) = H_{o} \wedge H_{e} \wedge H'\label{card4}\\
&H_{o} = HM(\langle x_1, x_3 \dots, x_{n-1}\rangle, \langle y_1, y_3 \dots, y_{n-1}\rangle 
\rightarrow \langle d_1,\dots,d_{n} \rangle) \label{card5}\\
&H_{e} = HM(\langle x_2, x_4 \dots, x_n \rangle, \langle y_2, y_4 \dots, y_n \rangle 
\rightarrow \langle e_1,\dots,e_{n}\rangle)\label{card6}\\
&H' = \bigwedge^{n-1}_{i=1} (\neg d_{i+1} \vee \neg e_{i} \vee c_{2i+1}) 
\wedge (\neg d_{i+1} \vee c_{2i}) \wedge (\neg e_{i} \vee c_{2i}) \wedge \nonumber\\
&\bigwedge^{n-1}_{i=1} (d_{i+1} \vee e_{i} \vee \neg c_{2i}) 
\wedge (d_{i+1} \vee \neg c_{2i+1}) \wedge (e_{i} \vee \neg c_{2i+1}) \label{card7}
\end{align}

\paragraph{Half Sorting Networks} Given an input of Boolean variables 
$x_1, \dots, x_{2n}$, Half Sorting (HS) Networks sort the variables 
with respect to their value assignment as follows.

For input size $2n=2$:
\begin{align}
&HS(\langle x_1, x_2\rangle \rightarrow \langle c_1,c_2\rangle) = HM(\langle x_1\rangle, \langle x_2\rangle \rightarrow \langle c_1,c_{2}\rangle) \label{card8}
\end{align}

For input size $2n>2$:
\begin{align}
&HS(\langle x_1, \dots, x_{2n}\rangle \rightarrow \langle c_1,\dots,c_{2n} \rangle) = H_{1} \wedge H_{2} \wedge H_{M} \label{card9}\\
&H_{1} = HS(\langle x_1, \dots, x_n\rangle \rightarrow \langle d_1,\dots,d_{n}\rangle) \label{card10}\\
&H_{2} = HS(\langle x_{n+1}, \dots, x_{2n} \rangle \rightarrow \langle {d'}_1,\dots,{d'}_{n}\rangle)) \label{card11}\\
&H_{M} = HM(\langle d_1, \dots, d_n\rangle,\langle {d'}_1, \dots, {d'}_n\rangle \rightarrow \langle c_1,\dots,c_{2n}\rangle) \label{card12}
\end{align}

\paragraph{Simplified Merging Networks} Given two inputs of Boolean variables 
$x_1, \dots, x_{n}$ and $y_1, \dots, y_n$, Simplified Merging (SM) Networks 
merge inputs into a single output of size $2n$ using the CNF encoding as follows.

For input size $n=1$:
\begin{align}
&SM(\langle x_1\rangle, \langle y_1\rangle \rightarrow \langle c_1,c_2\rangle) = HM(\langle x_1\rangle, \langle y_1\rangle \rightarrow \langle c_1, c_2\rangle)
\label{card13}
\end{align}

For input size $n>1$:
\begin{align}
&SM(\langle x_1, \dots, x_n\rangle, \langle y_1, \dots, y_n\rangle 
\rightarrow \langle d_1,c_2,\dots,c_{n+1}\rangle) = S_{o} \wedge S_{e} \wedge S' \label{card14}\\
&S_{o} = SM(\langle x_1, x_3 \dots, x_{n-1}\rangle, \langle y_1, y_3 \dots, y_{n-1}\rangle 
\rightarrow \langle d_1,\dots,d_{\frac{n}{2} + 1}\rangle) \label{card15}\\
&S_{e} = SM(\langle x_2, x_4 \dots, x_n\rangle, \langle y_2, y_4 \dots, y_n\rangle 
\rightarrow \langle e_1,\dots,e_{\frac{n}{2} + 1}\rangle) \label{card16}\\
&S' = \bigwedge^{n/2}_{i=1} (\neg d_{i+1} \vee \neg e_{i} \vee c_{2i+1}) 
\wedge (\neg d_{i+1} \vee c_{2i}) \wedge (\neg e_{i} \vee c_{2i}) \wedge \nonumber\\
&\bigwedge^{n/2}_{i=1} (d_{i+1} \vee e_{i} \vee \neg c_{2i}) 
\wedge (d_{i+1} \vee \neg c_{2i+1}) \wedge (e_{i} \vee \neg c_{2i+1})
\label{card17}
\end{align}

Note that unlike HM, SM counts the number of variables assigned 
to true from input variables $x_1, \dots, x_n$ and $y_1, \dots, y_n$ 
upto $n+1$ bits instead of $2n$.

\paragraph{Cardinality Networks} Given an input of Boolean variables 
$x_1,\dots, x_n$ with $n = k u$ where $p,u \in \mathbb{N}$ and $k$ is the smallest power of 2 such that $k>p$, 
the CNF encoding of Cardinality Networks ($CN_{k}^{=}$) is as follows.

For input size $n = k$:
\begin{align}
&CN_{k}^{=}(\langle x_1,\dots,x_n\rangle \rightarrow \langle c_1,\dots,c_n\rangle) = HS(\langle x_1, \dots, x_n\rangle 
\rightarrow \langle c_1,\dots,c_n\rangle) \label{card18}
\end{align}

For input size $n > k$:
\begin{align}
&CN_{k}^{=}(\langle x_1,\dots,x_n\rangle \rightarrow \langle c_1,\dots,c_{k}\rangle) = C_{1} \wedge C_{2} \wedge C_{M} \label{card19}\\
&C_{1} = CN_k^{=}(\langle x_1,\dots,x_k\rangle \rightarrow \langle d_1,\dots,d_k\rangle) \label{card20}\\
&C_{2} = CN_k^{=}(\langle x_{k+1},\dots,x_n\rangle \rightarrow \langle {d'}_1,\dots,{d'}_k\rangle) \label{card21}\\
&C_{M} = SM(\langle d_1, \dots, d_k\rangle,\langle{d'}_1, \dots, {d'}_{k}\rangle \rightarrow \langle c_1,\dots,c_{k+1}\rangle) \label{card22}
\end{align}

\section{Proof for Algorithm~\ref{incremental}}
\label{app:proof}

Given hard clauses (\ref{sat2}-\ref{sat6}) and Theorem~\ref{proof:gac}, Corollary~\ref{proof:forwardpass} follows.

\begin{corollary}[Forward Pass]
\label{proof:forwardpass}
Given the values of state $\bar{Y}^{i}_{s,t}$ and action $\bar{X}^{i}_{a,t}$ decision variables for all bits $1\leq i \leq m$ and time step $t\in \{1,\dots,H\}$, and the learned transition function $\tilde{T}$, hard clauses (\ref{sat2}-\ref{sat6}) deterministically assign values to all state decision variables $Y^{i}_{s,t+1}$ through Unit Propagation (from Theorem~\ref{proof:gac}) such that $\tilde{T}(\langle \bar{Y}^{i}_{s,t} | s \in S, 1 \leq i \leq m\rangle{}^\frown \langle \bar{X}^{i}_{a,t} | a \in A, 1 \leq i \leq m\rangle) = \langle \bar{Y}^{i}_{s,t+1} | s \in S, 1 \leq i \leq m\rangle$.
\end{corollary}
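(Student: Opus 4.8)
The plan is to prove the corollary by induction on the BNN layer index $l$, using the biconditional input/output clauses (\ref{sat2})--(\ref{sat4}) for the boundary and Theorem~\ref{proof:gac} as the propagation engine at each hidden/output neuron. First I would dispose of the base case: once the literals $\bar{Y}^{i}_{s,t}$ and $\bar{X}^{i}_{a,t}$ are fixed, every biconditional in (\ref{sat2}) and (\ref{sat3}) becomes a unit clause, so unit propagation (UP) sets $Z_{In(s,i),1,t}$ and $Z_{In(a,i),1,t}$ to exactly those bit values; since each $j\in J(1)$ is the image of some state or action bit under $In(\cdot,\cdot)$, all layer-$1$ variables $Z_{j,1,t}$ are thereby assigned and equal the corresponding BNN input.

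For the inductive step I would assume that all $Z_{i,l-1,t}$ ($i\in J(l-1)$) are assigned and equal the BNN layer-$(l-1)$ outputs $y_{i,l-1}$, fix $j\in J(l)$, and observe that its activation is encoded by exactly one of clauses (\ref{sat5}) or (\ref{sat6}), i.e.\ by an $Act_{p_j}$ constraint all of whose $x$-literals have the form $Z_{i,l-1,t}$ or $\neg Z_{i,l-1,t}$ and are therefore already assigned. Hence one of Cases 3 and 4 of Definition~\ref{def:gac} applies, and Theorem~\ref{proof:gac} guarantees that UP deduces the value of the activation literal ($Z_{j,l,t}$ for (\ref{sat5}), $\neg Z_{j,l,t}$ for (\ref{sat6})), hence of $Z_{j,l,t}$. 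The remaining point is that this deduced value is the BNN's binarized output $y_{j,l}$: from the identity $\sum_{w_{i,j,l-1}=1} Z_{i,l-1,t} + \sum_{w_{i,j,l-1}=-1}(1-Z_{i,l-1,t}) = \tfrac{1}{2}\big(\Delta_{j,l}+|J(l-1)|\big)$ together with the definition of $p_j$ (resp.\ $p'_j$) from the batch-normalization parameters, the cardinality test ``at least $p_j$ true literals'' is equivalent, up to the integrality of $\Delta_{j,l}$, to $x_{j,l}\geq 0$, which is precisely the binarization rule defining $y_{j,l}$; the split into (\ref{sat5})/(\ref{sat6}) is just the $Most$/$Least$ conversion that keeps the threshold at most $\lceil |J(l-1)|/2\rceil$ as the Bi-Directional encoding requires. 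Iterating this over $l=2,\dots,L$ assigns every $Z_{j,l,t}$.

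Finally, with layer $L$ fully assigned, the biconditionals (\ref{sat4}) are unit clauses, so UP sets $Y^{i}_{s,t+1}=Z_{Out(s,i),L,t}$ for all $s\in S$, $1\leq i\leq m$. Since the sequence of UP-forced assignments above is, step for step, the forward evaluation of the learned BNN on the given inputs, the resulting tuple equals $\tilde{T}(\langle \bar{Y}^{i}_{s,t}\mid s\in S,1\leq i\leq m\rangle{}^\frown \langle \bar{X}^{i}_{a,t}\mid a\in A,1\leq i\leq m\rangle)$, which is the claimed equality; determinism is immediate because every assignment in the argument is forced by UP (and UP is confluent).

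I expect the only non-routine part to be the arithmetic bookkeeping in the inductive step: verifying the displayed identity relating $\sum Z + \sum(1-Z)$ to $\Delta_{j,l}$, checking that the case split on $p^{*}_{j}$ versus $\lceil |J(l-1)|/2\rceil$ together with the negated-literal encoding of $-1$ weights faithfully implements the signed weighted sum and its batch-normalized threshold, and handling the ceiling/parity in the definition of $p_j$ so that ``$\geq p_j$ true literals'' lines up exactly with $x_{j,l}\geq 0$. Everything else reduces to a layerwise appeal to Theorem~\ref{proof:gac} and to the fact that a biconditional clause acts as a unit clause under UP once one side is set.
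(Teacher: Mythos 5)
Your proposal is correct and follows essentially the same route as the paper, which states Corollary~\ref{proof:forwardpass} as an immediate consequence of the mapping clauses (\ref{sat2}--\ref{sat4}), the BNN activation clauses (\ref{sat5}--\ref{sat6}), and Theorem~\ref{proof:gac} without spelling out the details. Your layer-by-layer induction—unit propagation through the biconditional input/output clauses, Cases 3 and 4 of Definition~\ref{def:gac} at each fully-assigned $Act_{p_j}$ constraint, and the arithmetic check that the threshold $p_j$ (with the $p^{*}_{j}$ versus $\lceil |J(l-1)|/2\rceil$ case split) matches the batch-normalized binarization rule—is exactly the argument the paper leaves implicit, worked out in full.
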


\begin{theorem}[Finiteness of Algorithm~\ref{incremental}]
\label{proof:alg}
Let $\tilde{\Pi} = \langle S,A,C,\tilde{T},I,G,R \rangle$ be the learned deterministic factored planning problem. For a given horizon $H$ and $m$-bit precision, Algorithm~\ref{incremental} terminates in finite number of iterations $n\leq 2^{|A|\cdot m\cdot H}$.
\end{theorem}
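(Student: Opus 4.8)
The plan is to show that the space of candidate solutions explored by Algorithm~\ref{incremental} is finite and that each non-terminating iteration permanently removes at least one point from it. First I would observe that the only free Boolean variables constrained by a generalized landmark clause (\ref{gen1}) or constraint (\ref{gen2}) are the action decision variables $X^{i}_{a,t}$ with $a\in A$, $1\leq i\leq m$ and $1\leq t\leq H$; there are exactly $|A|\times m\times H$ of them, so there are at most $2^{|A|\times m\times H}$ distinct full Boolean assignments to them, which I will call \emph{action profiles}. By Corollary~\ref{proof:forwardpass}, fixing an action profile together with the initial-state clauses (\ref{sat1}) determines, through unit propagation, a unique assignment to every state variable; hence a candidate solution of $\tilde{\Pi}$ is fully characterized by its action profile, and the planner's feasible set can be identified with a set of action profiles.

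Next I would verify that the landmark clause (\ref{gen1}) added at iteration $n$ is violated by exactly one action profile, namely the one described by $\mathcal{L}_n$: the disjunction $\bigvee_{t,a}\bigl(\bigvee_{\langle t,a,i\rangle\in\mathcal{L}_n}\neg X^{i}_{a,t}\vee\bigvee_{\langle t,a,i\rangle\notin\mathcal{L}_n}X^{i}_{a,t}\bigr)$ evaluates to false precisely when $X^{i}_{a,t}=1$ for every $\langle t,a,i\rangle\in\mathcal{L}_n$ and $X^{i}_{a,t}=0$ for every $\langle t,a,i\rangle\notin\mathcal{L}_n$, i.e.\ exactly on the profile $\mathcal{L}_n$. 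Consequently, adding clause (\ref{gen1}) to FD-SAT-Plan+ (or constraint (\ref{gen2}) to FD-BLP-Plan+, which forbids the same single profile) removes exactly $\mathcal{L}_n$ from the feasible region while leaving the feasibility status of every other action profile unchanged.

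Then I would run the induction, mirroring the termination argument of \textit{OpSeq}~\cite{Davies2015}: let $F_n$ denote the set of action profiles feasible for the planner at the start of iteration $n$, so $|F_1|\leq 2^{|A|\times m\times H}$. If at iteration $n$ the planner is infeasible or $\mathcal{L}_n$ is a plan for $\Pi$, the algorithm returns; otherwise the solver returned some $\mathcal{L}_n\in F_n$, and by the previous paragraph $F_{n+1}=F_n\setminus\{\mathcal{L}_n\}$, so $|F_{n+1}|=|F_n|-1$. Hence after at most $|F_1|\leq 2^{|A|\times m\times H}$ non-terminating iterations the feasible set is empty, the planner reports infeasibility, and the algorithm halts; in particular Algorithm~\ref{incremental} terminates with $n\leq 2^{|A|\times m\times H}$.

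The step that needs the most care is the second one: checking that the landmark clause (\ref{gen1}) (resp.\ constraint (\ref{gen2})) excludes \emph{precisely} the profile $\mathcal{L}_n$ and no other, so that the cardinality of the feasible action-profile set decreases by exactly one — and never increases — at each iteration. A minor bookkeeping point to handle there is the convention that an ``empty'' $\mathcal{L}_n$ in line~3 signals planner infeasibility rather than the all-false action profile. Once those details are pinned down, finiteness follows immediately from the pigeonhole-style counting above.
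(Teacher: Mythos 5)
Your proposal is correct and follows essentially the same route as the paper's own inductive argument: identify candidate solutions with the at most $2^{|A|\times m\times H}$ action-variable assignments (states being determined via Corollary~\ref{proof:forwardpass}), and show that each non-terminating iteration's generalized landmark clause (\ref{gen1}) or constraint (\ref{gen2}) removes exactly the single profile $\mathcal{L}_n$ from the feasible set. Your explicit verification that the landmark clause is violated only by the profile $\mathcal{L}_n$ is a welcome sharpening of a step the paper states without detail, but the underlying counting argument is the same.
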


\begin{proof}[Proof by Induction.]
Let $V$ be the set of all value tuples for all action variables $A$ with $m$ bits of precision and time steps $t \in \{1,\dots,H\}$ such that $\langle \bar{A}^{1}, \dots, \bar{A}^{H}\rangle \in V$. From the definition of the decision variable $X^{i}_{a,t}$ in Section 3.1 and the values $\bar{A}^t = \langle \bar{a}_{1}^t, \dots, \bar{a}_{|A|}^t \rangle$ of action variables $A$ for all time steps $t \in \{1,\dots,H\}$, the value of every action decision variable $X^{i}_{a,t}$ is set using the binarization formula such that $\bar{a}^t  = -2^{m-1}\bar{X}^{m}_{a,t} + \sum_{i=1}^{m-1}2^{i-1}\bar{X}^{i}_{a,t}$. Given the initial values $\bar{S}^{i,s}_{I}$, hard clause (\ref{sat1}) sets the values of state decision variables $Y^{i}_{s,1}$ at time step $t=1$ for all bits $1\leq i \leq m$. Given the values of state $\bar{Y}^{i}_{s,1}$ and action $\bar{X}^{i}_{a,1}$ decision variables at time step $t=1$, the values of state decision variables $\bar{Y}^{i}_{s,2}$ are set (from Corollary~\ref{proof:forwardpass}). Similarly, using the values of action decision variables $\bar{X}^{i}_{a,t}$ for time steps $t\in\{2,\dots,H\}$, the values of state decision variables are set $\bar{Y}^{i}_{s,t}$ sequentially for the remaining time steps $t\in\{3,\dots,H+1\}$. Given we have shown that each element $\langle \bar{A}^{1}, \dots, \bar{A}^{H}\rangle \in V$ has a corresponding value tuple for state variables $S$ and time steps $t\in\{2,\dots,H+1\}$, we denote $V'\subseteq V$ as the subset of feasible value tuples for action variables $A$ and state variables $S$ with respect to hard clause (\ref{sat7}) (i.e., global function $C=true$) for time steps $t\in\{1,\dots,H\}$ and hard clause (\ref{sat10}) (i.e., goal state function $G=true$) for time step $t=H+1$.

Base Case ($n=1$): In the first iteration $n=1$, Algorithm~\ref{incremental} either proves infeasibility of $\tilde{\Pi}$ if and only if $V' = \emptyset$, or finds values of action variables $\pi = \langle \bar{A}^1, \dots, \bar{A}^H \rangle$. If the planner returns infeasibility of $\tilde{\Pi}$, Algorithm~\ref{incremental} terminates. Otherwise, values of action variables $\pi$ are sequentially simulated for time steps $t \in \{1, \dots, H\}$ given the initial values of state variables $\bar{S}^{i,s}_{I}$ using state transition function $T$ and checked for its feasibility with respect to (i) global function $C$ and (ii) goal state function $G$. If the domain simulator verifies all the propagated values of state variables as feasible with respect to (i) and (ii), Algorithm~\ref{incremental} terminates and returns $\pi$ as a feasible plan for the deterministic factored planning problem $\Pi$. Otherwise, values of action variables $\pi$ are used to generate a generalized landmark hard clause (or constraint) that is added back to the planner, which only removes $\pi$ from the solution space $V'$ such that $V' \leftarrow V' \setminus \pi$.

Induction Hypothesis ($n<i$): Assume that upto iteration $n<i$, Algorithm~\ref{incremental} removes exactly $n$ unique solutions from the solution space $V'$.

Induction Step ($n=i$): Let $n=i$ be the next iteration of Algorithm~\ref{incremental}. In iteration $n=i$, Algorithm~\ref{incremental} either proves infeasibility of $\tilde{\Pi}$ if and only if $V' = \emptyset$, or finds a value tuple $\pi = \langle \bar{A}^1, \dots, \bar{A}^H \rangle$ of action variables $A$ and time steps $t \in \{1, \dots, H\}$. If the planner returns infeasibility of $\tilde{\Pi}$, Algorithm~\ref{incremental} terminates. Otherwise, values of action variables $\pi$ are sequentially simulated for time steps $t \in \{1, \dots, H\}$ given the initial values of state variables $\bar{S}^{i,s}_{I}$ using state transition function $T$ and checked for its feasibility with respect to (i) global function $C$ and (ii) goal state function $G$. If the domain simulator verifies all the propagated values of state variables as feasible with respect to (i) and (ii), Algorithm~\ref{incremental} terminates and returns $\pi$ as a feasible plan for the deterministic factored planning problem $\Pi$. Otherwise, values of action variables $\pi$ are used to generate a generalized landmark hard clause (or constraint) that is added back to the planner, which only removes $\pi$ from the solution space $V'$ such that $V' \leftarrow V' \setminus \pi$.

By induction in at most $n = |A|\cdot m \cdot H$ iterations, Algorithm~\ref{incremental} either (i) proves there does not exist values $\pi = \langle \bar{A}^1, \dots, \bar{A}^H \rangle$ of action variables $A$ that is both a solution to $\tilde{\Pi}$ and $\Pi$ by reducing $V'$ to an emptyset, or (ii) returns $\pi$ as a solution to the deterministic factored planning problem $\Pi$ (i.e., $|V'| \geq 1$), and terminates.

\end{proof}

\section{Online Repositories}
\label{app:dom}

The respective online repositories for FD-SAT-Plan+ and FD-BLP-Plan+ used to generate experiments in this article are the following:
\begin{itemize}
    \item \url{https://github.com/saybuser/FD-SAT-Plan}\ , and
    \item \url{https://github.com/saybuser/FD-BLP-Plan}\ .
\end{itemize}
Formal text representations of all domains described and experimented in this article can be found in these repositories and read in by the respective planners.


\section{Computational Results}
\label{app:results}

\begin{table}[H]
  \centering
  \caption{Computational results including the runtimes and the total number of generalized landmark constraints generated for both FD-SAT-Plan+ and FD-BLP-Plan+ over all 27 instances within 1 hour time limit. For the instances that time out, secondary results on the solution quality of the returned plans (i.e., their duality gap) are provided.}
  \label{tab:compresults}
  \resizebox{\textwidth}{!}{%
  \begin{tabular}{| l | c | c | c | c | c | c |}
    \hline
     \multicolumn{3}{| c |}{Non-Incremental Runtimes} & \multicolumn{2}{| c |}{Incremental Runtimes} & \multicolumn{2}{| c |}{No. of Generalized Landmarks}\\ \hline
    Instances & FD-SAT-Plan+ & FD-BLP-Plan+ & FD-SAT-Plan+ & FD-BLP-Plan+ & FD-SAT-Plan+ & FD-BLP-Plan+ \\ \hline
    Nav,3x3,4 & 3.15 & 1.41 & 3.15 & 1.41 & 0 & 0 \\ \hline
    Nav,3x3,5 & 5.55 & 3.78 & 5.55 & 3.78 & 0 & 0 \\ \hline
    Nav,3x3,6 & 9.19 & 17.82 & 9.19 & 17.82 & 0 & 0 \\ \hline
    Nav,4x4,5 & 82.99 & 107.59 & 82.99 & 107.59 & 0 & 0 \\ \hline
    Nav,4x4,6 & 190.77 & 159.42 & 190.77 & 159.42 & 0 & 0 \\ \hline
    Nav,4x4,7 & 303.05 & 455.32 & 303.05 & 455.32 & 0 & 0 \\ \hline
    Nav,5x5,8 & 1275.36 & 3600$<$,no sol. & 1275.36 & 3600$<$,no sol. & 0 & 0 \\ \hline
    Nav,5x5,9 & 753.27 & 3600$<$,no sol. & 753.27 & 3600$<$,no sol. & 0 & 0 \\ \hline
    Nav,5x5,10 & 2138.62 & 3600$<$,no sol. & 2138.62 & 3600$<$,no sol. & 0 & 0 \\ \hline
    Inv,2,5 & 26.92 & 0.37 & 26.92 & 0.37 & 0 & 0 \\ \hline
    Inv,2,6 & 33.25 & 0.45 & 33.25 & 0.45 & 0 & 0 \\ \hline
    Inv,2,7 & 40.15 & 0.56 & 40.15 & 0.56 & 0 & 0 \\ \hline
    Inv,4,6 & 63.18 & 0.51 & 63.18 & 0.51 & 0 & 0 \\ \hline
    Inv,4,7 & 79.19 & 0.59 & 79.19 & 0.59 & 0 & 0 \\ \hline
    Inv,4,8 & 86.57 & 0.74 & 170.56 & 1.49 & 1 & 1 \\ \hline
    Sys,4,2 & 24.19 & 37.63 & 24.19 & 37.63 & 0 & 0 \\ \hline
    Sys,4,3 & 619.57 & 3600$<$,100\% & 3600$<$,no sol. & 3600$<$,no sol. & 6$\leq$ & n/a \\ \hline
    Sys,4,4 & 1561.78 & 3600$<$,no sol. & 3600$<$,no sol. & 3600$<$,no sol. & 3$\leq$ & n/a \\ \hline
    Sys,5,2 & 358.53 & 3600$<$,no sol. & 358.53 & 3600$<$,no sol. & 0 & n/a \\ \hline
    Sys,5,3 & 3600$<$,75\% & 3600$<$,no sol. & 3600$<$,no sol. & 3600$<$,no sol. & n/a & n/a \\ \hline
    Sys,5,4 & 3600$<$,100\% & 3600$<$,no sol. & 3600$<$,no sol. & 3600$<$,no sol. & n/a & n/a \\ \hline
    Cellda,x,10 & 197.16 & 106.93 & 592.44 & 405.52 & 2 & 2 \\ \hline
    Cellda,x,11 & 219.72 & 403.24 & 835.36 & 1539.12 & 2 & 3 \\ \hline
    Cellda,x,12 & 522.15 & 527.56 & 522.15 & 527.56 & 0 & 0 \\ \hline
    Cellda,y,8 & 144.95 & 89.64 & 144.95 & 89.64 & 0 & 0 \\ \hline
    Cellda,y,9 & 404.39 & 40.9 & 404.39 & 40.9 & 0 & 0 \\ \hline
    Cellda,y,10 & 575.78 & 544.45 & 575.78 & 544.45 & 0 & 0 \\ \hline
    \hline
    Coverage & 27/27 & 20/27 & 23/27 & 19/27 & & \\ \hline
    Opt. Proved & 25/27 & 19/27 & 23/27 & 19/27 & & \\ \hline
  \end{tabular}
  }
\end{table}


\bibliography{bibfile}

\end{document}